\documentclass[11pt]{article}

\usepackage[utf8]{inputenc}
\usepackage[T1]{fontenc}
\usepackage{amsmath,amssymb,amsthm}
\usepackage{mathtools}
\usepackage[shortlabels]{enumitem}
\usepackage{geometry}
\usepackage{graphicx}
\usepackage[hyperfootnotes=false]{hyperref}
\usepackage{natbib}
\usepackage{setspace}
\usepackage{xcolor}

\hypersetup{hidelinks}

\newcommand{\E}{\mathbb{E}}
\newcommand{\cS}{\mathcal{S}}
\newcommand{\1}{\mathbf{1}}

\newtheorem{theorem}{Theorem}
\newtheorem{prop}{Proposition}

\newtheorem{lm}[theorem]{Lemma}
\theoremstyle{definition}

\theoremstyle{remark}
\newtheorem{remark}[theorem]{Remark}

\title{Identification and Estimation of Nonsymmetric Determinantal Point Processes}
\author{Christian Gouri\'eroux\thanks{University of Toronto, Toulouse School of Economics and CREST; gouriero@ensae.fr} \, and
Yang Lu\thanks{Department of Mathematics and Statistics, Concordia University, Montreal, Canada; yang.lu@concordia.ca}}

\date{October 9, 2026}

\begin{document}

\maketitle

\begingroup
\renewcommand{\thefootnote}{}
\footnotetext{Gouri\'eroux gratefully acknowledges financial support of the ACPR Chair
``Regulation and Systemic Risks'', and the ERC DYSMOIA. Lu acknowledges support from
NSERC (grant no. RGPIN-2021-04144) and SSHRC (grant no. 100010986).}
\endgroup

\begin{abstract}
Determinantal point processes (DPPs) provide tractable distributions for finite random sets, but the usual symmetric-kernel specification imposes nonpositive pairwise dependence. Statistical research on nonsymmetric DPPs, which can accommodate positive dependence, remains limited. We study
identification and estimation for this class. First, we establish the
equivalence of two sufficient conditions for a nonsymmetric kernel to define
a valid DPP. Second, we develop the identification theory for
nonsymmetric DPPs.
Third, we develop cardinality-truncated maximum-likelihood estimation and
estimation by averaged stochastic gradient descent. {Under suitable identification and
regularity conditions, we establish consistency and asymptotic normality for
both estimators.} Simulations examine identification, the effects of truncation,
and the finite-sample performance of the estimators. We illustrate the methods with two cyber applications at markedly
different sampling frequencies: weekly sector-by-breach data and
second-by-second enterprise authentication data with more than five million set observations.
\end{abstract}
Keywords: Random Sets, Determinantal Point Process, Pairwise Dependence, Identification, Truncated Likelihood, Cyber Risk, Cyber Activity.

\section{Introduction}
\label{sec:intro}
The determinantal point process (DPP) is a tractable parametric distribution
for a binary vector $X=(X_1,\ldots,X_n)\in\{0,1\}^n$, or equivalently for the
finite random set $S=\{i:X_i=1\}\subseteq\{1,\ldots,n\}$. This family is parameterized by a matrix $L$, called the likelihood kernel. The literature has
traditionally focused on symmetric likelihood kernels $L$.
A consequence of symmetry is that the binary variables $X_i$, $i=1,\ldots,n$,
exhibit nonpositive pairwise dependence. This repulsive feature is less appropriate
for many economic and financial applications, such as joint defaults in credit
portfolios, co-occurrences of cyber incidents, and joint trading activities,
where binary events may instead display positive pairwise dependence
{\citep[see][for a random-set framework for such risks]{gourieroux2023risk}.
Recently, the DPP literature has started to investigate nonsymmetric DPPs
(NDPPs) \citep{brunel2018learning, gartrell2019learning,
arnaud2024determinantal}, but their statistical properties have received
little attention: which nonsymmetric kernels are valid, which are identified,
and how they can be estimated. This paper addresses these three questions.}

{Validity requires the likelihood kernel to be a $P_0$-matrix, that is,
to have nonnegative principal minors, a condition that is costly to check in
high dimension. We show that two tractable sufficient conditions used in the
literature coincide: the decomposition $L=\Sigma+A$, where $\Sigma$ is
symmetric positive semidefinite (SPSD) and $A$ is skew-symmetric, and the
requirement that the spectral norm of the associated parity kernel does not
exceed one. This class is not identified: diagonal similarity generates
continua of observationally equivalent kernels. With a diagonal symmetric
part, the kernel is identified up to sign changes in its off-diagonal entries.\par}

{For estimation, we condition the likelihood on the observed set
having at most $\bar n$ elements. This makes the fit of small configurations
insensitive to how larger ones are specified. For long data streams, we use
one-pass averaged stochastic gradient descent (ASGD). Truncated maximum
likelihood is consistent for the identified parameter set, and ASGD is
locally consistent. At a regular point, both estimators have the same normal
limit, and standard errors and tests based on the conditional information
must use the number of retained observations, not the calendar sample
size.\par}

Simulations support the truncated-likelihood inference results and document
the accuracy--computation trade-off of {ASGD}. In two cyber applications, NDPPs capture positive dependence and
outperform symmetric or independence benchmarks. {The authentication
database contains more than five million second-level set
observations; one ASGD pass over the 2.5 million used for the main fit takes
under seven minutes.}

Section~\ref{sec:models} introduces the NDPP model and compares two sufficient
conditions for kernel validity.
Section~\ref{sec:identification} addresses identification, and
Section~\ref{sec:inference} studies truncated maximum likelihood, ASGD, and inference for identified parameter
sets. Section~\ref{sec:simulation-study} reports the
Monte Carlo evidence, Section~\ref{sec:application} presents two cyber-event
applications, and Section~\ref{sec:conclusion} concludes. Longer proofs are gathered in the
appendix.

\section{Nonsymmetric DPP models}
\label{sec:models}

\subsection{Definitions and interpretations}
\label{sec:def}

The $L$-ensemble representation rests on the following identity
\citep[Theorem~2.1]{kulesza2012determinantal}.
\begin{lm}
\label{lem:principal-minor-sum}
			For any $n\times n$ matrix $L$ and subset
      $s\subseteq\{1,\ldots,n\}$, let $L_s$ denote the principal submatrix
	      indexed by $s$. Then the principal-minor identity is:
		\begin{equation}
			\label{allprincipalminors}
			\det(I_n+L)=\sum_{s\subseteq\{1,\ldots,n\}} \det L_s.%
		\end{equation}
		\end{lm}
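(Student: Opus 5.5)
The plan is to prove the identity by multilinearity of the determinant in the columns of $I_n+L$. Write the $j$-th column of $I_n+L$ as $e_j+\ell_j$, where $e_j$ is the $j$-th standard basis vector and $\ell_j$ is the $j$-th column of $L$. Since the determinant is linear in each column separately, we can expand $\det(e_1+\ell_1,\ldots,e_n+\ell_n)$ into a sum of $2^n$ determinants. Each term corresponds to one subset $s\subseteq\{1,\ldots,n\}$: for $j\in s$ we take $\ell_j$ in column $j$, and for $j\notin s$ we take $e_j$. We denote this term by $D_s$, so that $\det(I_n+L)=\sum_s D_s$.

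The key step, and the only one requiring care, is to show that $D_s=\det L_s$. The matrix defining $D_s$ has the unit vector $e_j$ in column $j$ for every $j\notin s$. We expand the determinant by Laplace expansion along each such column $j\notin s$, one after another. Each column contains a single $1$, located at position $(j,j)$ on the diagonal. Its cofactor sign is $(-1)^{j+j}=+1$, and the expansion deletes row $j$ and column $j$.

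Deleting a row and column with the same index preserves the relative order of the remaining indices. So after the expansion, column $j'$ in the reduced matrix still has its diagonal entry at the position corresponding to $(j',j')$, and iterating the expansion introduces no signs. After removing all rows and columns indexed by the complement of $s$, what remains is the matrix with rows and columns indexed by $s$ whose columns are $\ell_j$ restricted to the rows in $s$. This is exactly $L_s$. The main obstacle is only this sign and ordering bookkeeping. An equivalent way to handle it is to note that conjugating by a permutation matrix that lists $s$ first leaves the determinant unchanged, and turns the matrix into a block upper-triangular form $\begin{pmatrix} L_s & 0\\ * & I\end{pmatrix}$, whose determinant is $\det L_s$.

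For $s=\emptyset$ we use the convention $\det L_\emptyset=1$, which matches $D_\emptyset=\det I_n=1$. Summing $D_s=\det L_s$ over all subsets $s$ then yields \eqref{allprincipalminors}. The argument nowhere uses symmetry of $L$, so the identity holds for any square matrix $L$, including nonsymmetric kernels.
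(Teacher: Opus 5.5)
Your proof is correct and is essentially the standard argument. The paper gives no proof of its own; it cites Kulesza and Taskar (2012, Theorem~2.1), whose proof uses the same multilinearity expansion, applied one index at a time by induction rather than to all $2^n$ terms at once. One trivial slip: after the permutation, $\begin{pmatrix} L_s & 0\\ * & I\end{pmatrix}$ is block lower triangular, not upper, though its determinant is still $\det L_s$.
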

If every principal minor of $L$ is nonnegative---that is, if $L$ is a
$P_0$-matrix {\citep[Chapter~4]{johnson2020matrix}}---then:
\begin{equation}
	\label{ldpp}
	p(s)= \frac{\det L_s}{\det(I_n+L)},
  \qquad s\subseteq\{1,\ldots,n\},
\end{equation}
defines a probability mass function, with the convention
$\det L_{\emptyset}=1$. The matrix $L$ is called the likelihood kernel.
Other kernels can also be introduced.

\noindent (i) For a $P_0$ likelihood kernel, the correlation kernel is defined
by:
\[
K=L(I_n+L)^{-1}
\qquad\Longleftrightarrow\qquad
L=(I_n-K)^{-1}-I_n,
\]
where $I_n+L$ is invertible because its determinant is positive by
Lemma~\ref{lem:principal-minor-sum}.  

The correlation kernel is in one-to-one correspondence with $L$ and satisfies:
\[
\mathbb{P}(s\subseteq S)=\det K_s,
\qquad s\subseteq\{1,\ldots,n\}.
\]
Its principal minors therefore give the inclusion probabilities. Moreover:
\begin{align}
\label{marginal}
\mathbb{P}(X_i=1)&=K_{ii}, \qquad i=1,\ldots,n, \\
\operatorname{Cov}(X_i,X_j)&=-K_{ij}K_{ji}, \qquad i\neq j.
\label{correlationkernel}
\end{align}
Thus symmetric $K$, or equivalently symmetric $L$, implies nonpositive pairwise
covariances.

\noindent (ii) \citet{arnaud2024determinantal} defines the parity kernel as:
\begin{equation}
\label{paritykernel}
M=I_n-2K \qquad \Longleftrightarrow \qquad K=\frac{1}{2}(I_n-M).
\end{equation}
The relation with the likelihood kernel is:
\begin{equation}
\label{cayley}
M=(I_n-L)(I_n+L)^{-1}
\qquad \Longleftrightarrow \qquad
L=(I_n-M)(I_n+M)^{-1},
\end{equation}
where $I_n+M=2(I_n+L)^{-1}$ is invertible.

Thus a DPP may be parameterized equivalently by its likelihood kernel $L$,
correlation kernel $K$, or parity kernel $M$. If $L$ is a $P_0$-matrix, then
$K$ is also a $P_0$-matrix, although this latter condition alone is not
sufficient for a nonsymmetric matrix $K$ to be a valid correlation kernel
\citep[Theorem~2.1]{arnaud2024determinantal}.

\subsection{Constrained DPP models}
\label{sec:constrained-models}

{A tractable sufficient condition for the $P_0$ property is due to
\citet{gartrell2019learning}, who decompose $L=\Sigma+A$ into its symmetric
part $\Sigma=(L+L')/2$, assumed positive semidefinite, and its skew-symmetric
part $A=(L-L')/2=-A'$. We denote the resulting class of kernels by:
\begin{equation}
\label{decomposition}
\mathcal L_\Sigma=\{L=\Sigma+A:\ \Sigma=\Sigma'\succeq0,\ A'=-A\}.
\end{equation}
}%
Related learning
and simulation algorithms are developed by
\citet{gartrell2019learning,gartrell2021scalable,anari2022sampling},
\citet{alimohammadi2021fractionally}, and \citet{han2022scalable}.

{\citet[Proposition~2.4]{arnaud2024determinantal} gives a second
sufficient condition, stated on the parity kernel: $\|M\|_2\leq1$ implies that
$L$ is a $P_0$-matrix. The next proposition shows that the two conditions
coincide.\par}

\begin{prop}
\leavevmode
\label{prop:admissible-parity}
\begin{enumerate}[label=(\roman*)]
\item If the symmetric part $\Sigma$ of $L$ is SPSD, then $L$ is a
$P_0$-matrix and defines a valid DPP.
\item A $P_0$-matrix $L$ has an SPSD symmetric part if and only if its parity
kernel satisfies $\|M\|_2\leq1$, where $\|M\|_2$ is the largest singular
value of $M$.
\end{enumerate}
\end{prop}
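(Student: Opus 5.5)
For part (i), I will work principal submatrix by principal submatrix. For any $s$, the principal submatrix $L_s=\Sigma_s+A_s$ again has an SPSD symmetric part $\Sigma_s$ and a skew-symmetric part $A_s$. So it suffices to show that $\det L\ge 0$ whenever $\Sigma\succeq0$.

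First I treat $\Sigma\succ0$. Write
\[
L=\Sigma^{1/2}\bigl(I+\Sigma^{-1/2}A\Sigma^{-1/2}\bigr)\Sigma^{1/2}.
\]
The matrix $B=\Sigma^{-1/2}A\Sigma^{-1/2}$ is real skew-symmetric. Its eigenvalues are therefore purely imaginary and come in conjugate pairs $\pm i\mu$, or equal $0$. Hence
\[
\det(I+B)=\prod(1+\mu_k^2)>0, \qquad \det L=\det\Sigma\,\det(I+B)>0.
\]
The general SPSD case follows by continuity, applying this to $\Sigma+\varepsilon I$ and letting $\varepsilon\downarrow0$. Then Lemma~\ref{lem:principal-minor-sum} shows that (\ref{ldpp}) defines a probability mass function.

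For part (ii), I will use the Cayley relation (\ref{cayley}), $M=(I-L)(I+L)^{-1}$, where $I+L$ is invertible because $L$ is $P_0$. The condition $\|M\|_2\le1$ is equivalent to $\|Mv\|^2\le\|v\|^2$ for all $v$. Substitute $v=(I+L)w$, which is a bijection in $w$. Then $Mv=(I-L)w$, and the inequality becomes
\[
\|(I-L)w\|^2\le\|(I+L)w\|^2 .
\]
Expanding both sides, the terms $\|w\|^2$ and $\|Lw\|^2$ cancel, leaving $-2w'Lw\le 2w'Lw$, that is, $w'Lw\ge0$. Since $w'Lw=w'\Sigma w$ for real $w$ (the skew part contributes $w'Aw=0$), this says exactly that $\Sigma\succeq0$.

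Every step in this chain is an equivalence, so both directions are obtained at once. The $P_0$ hypothesis is used only to guarantee that $M$ is defined.

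The main obstacle I expect is the singular case in part (i), but the continuity argument should handle it cleanly. Along the way I need to check that $\Sigma+\varepsilon I$ still has skew part $A$, which it does, and that determinants are continuous. In part (ii) I must be careful to use the real inner product, so that the identity $\|Mv\|\le\|v\|$ for real $v$ indeed characterizes the spectral norm of the real matrix $M$. It does.
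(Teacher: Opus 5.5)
Your proof is correct. In part (ii) it is the paper's argument written for vectors: since $I+M=2(I+L)^{-1}$, your substitution $v=(I+L)w$ gives $(I+M)v=2w$, and your identity $\|v\|^2-\|Mv\|^2=2w'(L+L')w$ is exactly the quadratic form of the congruence \eqref{important}, $(I+M')(L+L')(I+M)=2(I-M'M)$.

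The genuine difference is in part (i). The paper does not prove it but takes it from Proposition~4.5.2 of \citet{johnson2020matrix}. You instead give a self-contained argument via $L=\Sigma^{1/2}(I+B)\Sigma^{1/2}$ with $B$ skew-symmetric, plus the perturbation $\Sigma+\varepsilon I$. This buys self-containment and a slightly stronger conclusion: when $\Sigma\succ0$ every principal minor is strictly positive, so $L$ is a $P$-matrix and every subset has positive probability. The limit can even be avoided. A real eigenvalue $\mu$ of $L_s$ has a real eigenvector $x$ with $\mu\|x\|^2=x'\Sigma_sx\geq0$, and nonreal eigenvalues pair into factors $|\mu|^2>0$, so $\det L_s\geq0$ directly.

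The paper's matrix form of the identity buys an immediate follow-up in return. It shows $L+L'$ and $I-M'M$ are congruent, so they share the same inertia. In particular $\Sigma=0$ exactly when $M$ is orthogonal. Your pointwise inequality yields this too, but only after polarizing back to the matrix identity.
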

\begin{proof}
Part (i) follows from Proposition~4.5.2 of \citet{johnson2020matrix}.
{For part (ii), eq.~\eqref{cayley} gives $L(I+M)=I-M$, hence
$(I+M')L(I+M)=(I+M')(I-M)$; transposing,
$(I+M')L'(I+M)=(I-M')(I+M)$. Adding the two yields:
\begin{equation}
\label{important}
(I+M')(L+L')(I+M)=2(I-M'M).
\end{equation}
Since $I+M$ is invertible:}
\[
L+L'\succeq0
\quad\Longleftrightarrow\quad
I-M'M\succeq0
\quad\Longleftrightarrow\quad
\|M\|_2\leq1.
\]
\end{proof}

{The boundary $\|M\|_2=1$ contains the purely skew-symmetric
kernels: by \eqref{important}, $\Sigma=0$ if and only if $M$ is orthogonal.
These kernels, studied by \citet[Section~3.3]{arnaud2024determinantal}, have a
zero diagonal, so $\mathbb{P}(S=\{i\})=0$ for every $i$, which makes them
unsuitable for data.\par}

\section{Identification}
\label{sec:identification}
In the $L$-ensemble class studied here, a DPP distribution is parameterized
by a $P_0$ likelihood kernel $L$.
Without further restrictions, the map from $L$ to the distribution of $S$ is
not one-to-one; that is, $L$ is not identifiable. We first examine a necessary
order condition and then the rank conditions for identification.

\subsection{Order conditions}

{The probability simplex for $S$ has dimension $2^n-1$, whereas the
unrestricted kernel $L$ has $n^2$ free entries. The necessary order condition
$n^2\leq2^n-1$ fails for $2\leq n\leq4$, so the model is underidentified in
these dimensions. For $n\geq5$, the dimension count alone does not settle
identification.\par}

\subsection{Reduction to principal minors}

A fuller analysis factors the map from $L$ to the distribution of $S$ as:
\[
  L
  \longmapsto
  \{\det L_s : s\subseteq \{1,\ldots,n\}\}
  \longmapsto
\left \{  p_L(s)
  =
  \frac{\det L_s}{\det(I_n+L)} \right\}.
\]
The second map is injective because
$\det L_s=p_L(s)/p_L(\emptyset)$. Identification therefore reduces to
determining which kernels have the same principal minors. This is the principal
minor assignment problem considered next.

\subsection{The principal minor assignment problem}
\label{sec:id-general}

{For a general kernel, \citet{loewy1986principal} characterizes the
kernels sharing all principal minors:}
\begin{prop}[\citealp{loewy1986principal}, Theorem~1]
\label{prop:loewy}
Assume that $n\geq4$, that $L$ is irreducible,\footnote{A square matrix $L$ is reducible if there exists a nonempty proper subset $S\subset\{1,\ldots,n\}$ such that $L_{S^c,S}=0$, or equivalently if a simultaneous permutation of its rows and columns makes it block upper triangular. It is irreducible otherwise.} and that, for every bipartition $\{\alpha,\beta\}$ of $\{1,\ldots,n\}$ satisfying $|\alpha|\geq2$ and $|\beta|\geq2$:
\[
\operatorname{rank}(L_{\alpha,\beta})\geq2
\qquad\text{or}\qquad
\operatorname{rank}(L_{\beta,\alpha})\geq2.
\]
Then every matrix $L_1$ with the same principal minors as $L$ satisfies:
\begin{equation}
		\label{potentialtranspose}
L_1=D L D^{-1}, \text{ or } L_1=D L' D^{-1},
\end{equation}
for some invertible diagonal matrix $D$.
\end{prop}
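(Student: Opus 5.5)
The plan is to follow Loewy's strategy, which reduces the problem to the combinatorics of cycle products. First, note that principal minors of $L$ are sums over permutations of products of entries. These products factor into \emph{cycle products} $L_{i_1i_2}L_{i_2i_3}\cdots L_{i_ki_1}$. Two matrices with the same principal minors of orders $1$ and $2$ share the diagonal entries, since $\det L_{\{i\}}=L_{ii}$. They also share the symmetric products $L_{ij}L_{ji}$, by comparing $\det L_{\{i,j\}}$. Conversely, both $DLD^{-1}$ and $DL'D^{-1}$ preserve every cycle product up to reversal of orientation, and hence all principal minors. The task is therefore the converse: show that the equalities of all minors force $L_1$ to match $L$ either on all oriented cycle products or on all reversed ones. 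A matrix whose off-diagonal support is strongly connected is then determined up to diagonal similarity by its oriented cycle products. To see this, fix a spanning tree of the directed support graph and rescale $D$ so that $L_1$ and $DLD^{-1}$ agree on the tree edges. Equality of the cycle products then propagates to the remaining edges. Irreducibility is exactly what guarantees the strong connectivity needed here.

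Next, use the $3\times3$ minors. The identity $\det L_{\{i,j,k\}}=\det L_{1,\{i,j,k\}}$, combined with the equalities already obtained, gives
\[
L_{ij}L_{jk}L_{ki}+L_{ik}L_{kj}L_{ji}=(L_1)_{ij}(L_1)_{jk}(L_1)_{ki}+(L_1)_{ik}(L_1)_{kj}(L_1)_{ji}.
\]
The product of the two oriented $3$-cycles equals $\prod L_{ab}L_{ba}$, so it is also preserved. Thus the unordered pair $\{c_{ijk},c_{ikj}\}$ of the two orientations is preserved, and on each triangle $L_1$ is ``locally'' either $L$ or $L'$. Higher cycles are handled the same way by induction on length, using minors of order $k$ together with the lower-order equalities already established. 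At each order this gives a local orientation choice for every cycle.

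The main obstacle is showing that these local orientation choices are \emph{globally consistent}: one must rule out $L_1$ agreeing with $L$ on some cycles and with $L'$ on others. This is where the rank condition on bipartitions enters, and it is also why $n\ge4$ is needed. If the choices disagreed, I would try to construct a bipartition $\{\alpha,\beta\}$ separating the ``$L$-type'' from the ``$L'$-type'' parts. The next step would be to show that a mixed assignment reproducing all principal minors forces the determinants of suitable $2\times2$ submatrices of $L_{\alpha,\beta}$ and of $L_{\beta,\alpha}$ to vanish simultaneously. Concretely, one compares the minors on index sets $\{a,b,c,d\}$ with $a,b\in\alpha$ and $c,d\in\beta$, which expresses the mismatch as a product of $2\times2$ minors of the two off-diagonal blocks. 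This would give $\operatorname{rank}L_{\alpha,\beta}\le1$ and $\operatorname{rank}L_{\beta,\alpha}\le1$, contradicting the hypothesis. I expect this combinatorial step to be delicate, particularly when some entries vanish and cycles must be routed through the support graph. The simplest approach is to invoke \citet[Theorem~1]{loewy1986principal} directly, as the paper does, rather than reprove it. The final step combines the uniform orientation choice with the spanning-tree rescaling, which yields \eqref{potentialtranspose}.
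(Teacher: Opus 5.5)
The paper gives no proof of this proposition. It quotes Loewy's Theorem~1 as an external result, so your closing suggestion to cite it is exactly the paper's route. The routine parts of your sketch are sound:
\begin{itemize}
\item orders one and two fix the diagonal and the products $L_{ij}L_{ji}$;
\item $DLD^{-1}$ and $DL'D^{-1}$ preserve every principal minor;
\item for irreducible $L$, equality of all oriented cycle products forces equal supports, because every edge of a strongly connected digraph lies on a directed cycle, and then diagonal similarity.
\end{itemize}
One small repair: for a digraph, a non-tree edge need not close a directed cycle using tree edges alone. The ``spanning tree'' step should instead use directed paths from a root, together with the fact that every closed walk splits into directed cycles.

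The self-contained route, however, has a genuine gap exactly where the content of the theorem lies.

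First, the induction on cycle length does not yield local orientation choices. Grouping the permutations in $\det L_s$ by cycle type shows that equality of all principal minors is equivalent to equality, for every $s$, of a single number $h(s)$. This is the sum of the products over the $(|s|-1)!$ oriented cycles through all indices of $s$. For $|s|=3$, this number together with the known product $c_{ijk}c_{ikj}$ fixes the unordered pair, as you say. For $s=\{a,b,c,d\}$, however, it is one equation for six oriented $4$-cycles forming three reversal pairs, and only the within-pair products are known. So the order-four minors by themselves assign no orientation to any individual $4$-cycle. This is harmless when every off-diagonal entry is nonzero. Writing $c(v_1\cdots v_k)=L_{v_1v_2}L_{v_2v_3}\cdots L_{v_kv_1}$, one has $c(v_1\cdots v_k)=c(v_1v_2v_3)\,c(v_1v_3\cdots v_k)/c(v_1v_3)$, and everything reduces to triangles. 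For a general irreducible $L$, though, triangles may vanish and longer cycles carry independent information.

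Second, the global consistency step is only announced, and the mechanism you propose would not close it. Suppose the discrepancy on $\{a,b,c,d\}$ is a product of a $2\times2$ minor of $L_{\alpha,\beta}$ and the paired minor of $L_{\beta,\alpha}$. Its vanishing says only that, for each choice of $a,b\in\alpha$ and $c,d\in\beta$, one of the two factors is zero. It does not say that both blocks have rank at most one. For example, take $\alpha=\{1,2,3\}$, $\beta=\{4,5,6\}$, $L_{14}=L_{25}=L_{52}=L_{63}=1$, and all other entries of the two off-diagonal blocks zero. Every paired product then vanishes, yet both blocks have rank two. Nor do you say how a vertex bipartition is extracted from a pattern of triangle orientations.

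So the statement is established here only through the citation. That suffices in context, since the paper also treats it as an imported theorem, but the sketch should not be presented as a proof.
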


Matrices related by the first equality in \eqref{potentialtranspose} are
diagonally similar (or D-similar); under the second equality, $L_1$ is diagonally
similar to $L'$.  %

{The class $\mathcal L_\Sigma$ guarantees validity
(Proposition~\ref{prop:admissible-parity}(i)) but, as we now show, not
identification.\par}

{Let $L=\Sigma+A\in\mathcal L_\Sigma$ with $\Sigma$ positive
definite, and let $D$ be an invertible diagonal matrix. The kernel $DLD^{-1}$
has the same principal minors as $L$, and its symmetric part,
$\tfrac12(DLD^{-1}+D^{-1}L'D)$, is continuous in $D$ and equals $\Sigma$ at
$D=I_n$. It therefore remains positive definite for all $D$ close enough to
$I_n$, so $\mathcal L_\Sigma$ contains a continuum of observationally
equivalent kernels.
{For example,
$L_\star=\left(\begin{smallmatrix}1&5\\-1&1\end{smallmatrix}\right)$ is a
$P_0$-matrix whose symmetric part has eigenvalues $3$ and $-1$, so
$L_\star\notin\mathcal L_\Sigma$. With $D_1=\operatorname{diag}(1,\sqrt5)$
and $D_2=\operatorname{diag}(1,2)$, its transforms $L_k=D_kL_\star D_k^{-1}$
are:
\[
L_1=\begin{pmatrix}1&\sqrt5\\-\sqrt5&1\end{pmatrix},
\qquad
L_2=\begin{pmatrix}1&5/2\\-2&1\end{pmatrix}.
\]
Their symmetric parts are $I_2$ and
$\left(\begin{smallmatrix}1&1/4\\1/4&1\end{smallmatrix}\right)$, so both
belong to $\mathcal L_\Sigma$, and they have the same principal minors as
$L_\star$. The equivalence class of $L_1$ thus contains a continuum of kernels
in $\mathcal L_\Sigma$ and kernels, such as $L_\star$, outside it. We next
consider a subclass of $\mathcal L_\Sigma$ that reduces this identification
issue.}\par}

\subsection{The case \texorpdfstring{$L=\Lambda+A$}{L = Lambda + A} with diagonal \texorpdfstring{$\Lambda$}{Lambda}}
\label{sec:diag-skew}
Let us now restrict the symmetric part of $L$ to be diagonal, writing
$\Sigma=\Lambda=\operatorname{diag}(\lambda_1,\ldots,\lambda_n)$, where
$\lambda_i\geq0$. Equivalently:
\begin{equation}
\label{oppositesign}
L_{ij}+L_{ji}=0, \qquad i\neq j.
\end{equation}

{We denote this subclass of $\mathcal L_\Sigma$ by
$\mathcal L_\Lambda$ and call it the diagonal-skew model.} This restriction removes $n(n-1)/2$ parameters and substantially simplifies
the identification.

{Restriction~\eqref{oppositesign} is the magnitude symmetry
studied by \citet{brunel2018learning}, $L_{ij}=\epsilon_{ij}L_{ji}$ for
$i\neq j$ with $\epsilon_{ij}\in\{-1,1\}$, imposed on the likelihood kernel
with every $\epsilon_{ij}=-1$. \citet{brunel2018learning} imposes it on the
correlation kernel of a signed DPP, where it eases recovery of the kernel from
its principal minors but does not guarantee that the kernel defines a DPP, so
validity must be checked separately. Because
$\mathcal L_\Lambda\subset\mathcal L_\Sigma$, the diagonal-skew model
guarantees validity (Proposition~\ref{prop:admissible-parity}(i)) and leaves
only finitely many observationally equivalent kernels.\par}

{Within $\mathcal L_\Lambda$, the off-diagonal blocks of $L$ are
those of its skew part $A$, so the two cross-block ranks in
Proposition~\ref{prop:loewy} coincide. Only one cross-block rank condition
is therefore needed. We apply to the skew part $A$ a result of \citet[Theorem~7]{boussairi2015skew}, which sharpens
Proposition~\ref{prop:loewy} for skew-symmetric matrices with nonzero
off-diagonal entries.\par}

\begin{prop}
\label{lda}
Let $n\geq4$, and write $L_0=\Lambda_0+A_0$, where $\Lambda_0$ is diagonal
with nonnegative entries and $A_0'=-A_0$.
Assume that $L_0$ is off-diagonal full, meaning that all its off-diagonal entries
are nonzero, and that, for every bipartition
$\{\alpha,\beta\}$ of $\{1,\ldots,n\}$ with
$|\alpha|\geq2$ and $|\beta|\geq2$, the following condition holds:
\begin{equation}
\label{simplified}
\operatorname{rank}\bigl((L_0)_{\alpha,\beta}\bigr)\geq2.
\end{equation}
Within {$\mathcal L_\Lambda$}, the
kernels {whose principal minors of order at most four coincide
with those of $L_0$ (and hence all of whose principal minors do)} are exactly the
$2^{n-1}$ distinct matrices $D L_0D^{-1}$ and the $2^{n-1}$ distinct
matrices $D L_0'D^{-1}$, where $D$ ranges over the sign matrices:
\[
D=\operatorname{diag}(1,d_2,\ldots,d_n),
\qquad d_j\in\{-1,1\}.
\]
The two sets of matrices are disjoint, so the equivalence class contains exactly
$2^n$ kernels.

Within either set, impose:
\[
L_{1j}>0,\qquad j=2,\ldots,n.
\]
These restrictions select a unique diagonal-sign representative. The two row-normalized
representatives associated with $L_0$ and $L_0'$ have opposite signs in every
off-diagonal entry outside the first row and column. Hence the additional
restriction $L_{23}>0$ selects one of the two remaining kernels;
off-diagonal fullness ensures that $L_{23}\ne0$.
\end{prop}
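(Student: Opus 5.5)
Let $L_1=\Lambda_1+A_1\in\mathcal L_\Lambda$ have the same principal minors of order at most four as $L_0$. The plan is to peel off the diagonal first, then work only with the skew parts.

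\textbf{Step 1: diagonal and skew entries of order $\le2$.} Minors of order one give $\Lambda_1=\Lambda_0$. For $i\neq j$, the $2\times2$ minor of $L=\Lambda+A$ is $\lambda_i\lambda_j+a_{ij}^2$, so $a_{ij}^{(1)}=\pm a_{ij}^{(0)}$. In particular $L_1$ is also off-diagonal full.

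\textbf{Step 2: reduce to the skew parts.} For $|s|=3$, expand $\det(\Lambda_s+A_s)$ as a sum over subsets $t\subseteq s$ of $\prod_{i\in s\setminus t}\lambda_i\det A_t$. Skew-symmetric minors of odd order vanish, and the even-order ones are squared Pfaffians. Hence every $3\times3$ minor of $L$ depends only on $\Lambda$ and on $a_{ij}^2$, and so carries no sign information. For $|s|=4$, the only term not already fixed by lower orders is $\det A_s=\mathrm{Pf}(A_s)^2$, with
\[
\mathrm{Pf}(A_{\{i,j,k,l\}})=a_{ij}a_{kl}-a_{ik}a_{jl}+a_{il}a_{jk}.
\]
So the hypothesis says exactly that $A_1$ and $A_0$ have the same principal minors of order $\le4$. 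By the same expansion, all higher principal minors of $L$ are polynomials in $\Lambda$ and the principal minors of $A$. Indeed $\det A_t=\mathrm{Pf}(A_t)^2$, and I will invoke the fact, part of \citet[Theorem~7]{boussairi2015skew}, that for off-diagonal full skew matrices the principal minors of order $\le4$ determine all of them. This gives the parenthetical claim once the characterization is proved, since $DL_0D^{-1}$ and $DL_0'D^{-1}$ visibly share all principal minors with $L_0$.

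\textbf{Step 3: characterize the equivalence class.} Apply \citet[Theorem~7]{boussairi2015skew} to $A_0$. It is off-diagonal full, and by \eqref{simplified} its cross blocks $(A_0)_{\alpha,\beta}=(L_0)_{\alpha,\beta}$ have rank at least $2$. The theorem yields $A_1=DA_0D^{-1}$ or $A_1=DA_0'D^{-1}$ with $D$ diagonal. Since Step 1 gives $|a^{(1)}_{ij}|=|a^{(0)}_{ij}|$, we need $|d_i/d_j|=1$ for all $i\neq j$. Hence $D$ can be rescaled to a $\pm1$ sign matrix with $d_1=1$. Because $D\Lambda_0D^{-1}=\Lambda_0$, this transfers to $L_1$.

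Verifying the hypotheses of that theorem in the form it requires is the main obstacle. If its formulation is only ``same principal minors'' rather than ``order $\le4$'', I would instead argue directly from the sign pattern. Write $a^{(1)}_{ij}=\epsilon_{ij}a^{(0)}_{ij}$. The $4\times4$ Pfaffian equalities constrain the $\epsilon$'s on each quadruple. The rank condition rules out degenerate cancellations that would allow a non-switching pattern. One then shows that $\epsilon$ is either a switching $\epsilon_{ij}=d_id_j$ or minus a switching, since $A'=-A$.

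\textbf{Step 4: counting, disjointness and normalization.} The maps $D\mapsto DL_0D^{-1}$ with $d_1=1$ are injective, because $d_id_j$ for all $i\neq j$ with $d_1=1$ recovers $D$ by off-diagonal fullness. This gives $2^{n-1}$ matrices in each set. For disjointness, suppose $DL_0D^{-1}=EL_0'E^{-1}$. Then the off-diagonal entries satisfy $a_{ij}=-c_ic_ja_{ij}$, i.e.\ $c_ic_j=-1$ for all $i\neq j$. This is impossible for three distinct indices when $n\ge3$.

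For normalization, choosing $d_j=\operatorname{sign}((L_0)_{1j})$ makes row one positive, and this choice is unique. Given row-normalized representatives of $L_0$ and $L_0'$, both have the same positive first row. The second one equals $\tilde D\tilde L'\tilde D$ with $\tilde d_j=-1$ for $j\ge2$, since the transpose flips all first-row signs. Hence for $i,j\ge2$ its entry is $\tilde d_i\tilde d_j(-\tilde L_{ij})=-\tilde L_{ij}$: every off-diagonal entry outside the first row and column is opposite. Finally, $L_{23}>0$ selects exactly one of the two, as $L_{23}\neq0$ by fullness.
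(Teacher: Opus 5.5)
Your proposal is correct and follows essentially the same route as the paper: equal singleton minors fix the diagonal, the subset expansion of $\det(\Lambda+A)_s$ carries the equality of minors of order at most four over to the skew parts, \citet[Theorem~7]{boussairi2015skew} gives the sign-similarity or transposition relation, and your counting, disjointness ($c_ic_j=-1$ on a triple) and $J=\operatorname{diag}(1,-1,\ldots,-1)$ normalization arguments match the appendix. The fallback sign-pattern argument in Step~3 and the extra ``fact'' invoked in Step~2 are unnecessary: the cited theorem is already stated for minors of order at most four, and, as you also note, the parenthetical claim follows directly from the characterization.
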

\begin{proof}
{Equal singleton minors give equal diagonals, after which the skew
parts have the same principal minors of order at most four.
\citet[Theorem~7]{boussairi2015skew} then gives the sign-similarity or
transposition relation.} Appendix~\ref{app:proofs-est}
gives the complete proof, including the counting and normalization arguments.
\end{proof}

When every diagonal entry of the diagonal matrix $\Lambda$ is positive, this
specification can equivalently be written as:
\begin{equation}
	\label{qualitydiversity}
	L=\Lambda^{1/2}(I_n+R)\Lambda^{1/2},
\end{equation}
where $R=\Lambda^{-1/2}A\Lambda^{-1/2}$ is skew-symmetric. The factorization
is analogous to the quality--diversity decomposition used for symmetric DPPs
\citep{kulesza2012determinantal,anzai2026determinantal}: matrix $\Lambda$ controls
item qualities, while $R$ controls the dependencies. For $i\neq j$, let us consider the odds ratio conditional
on all remaining components being zero,
$p(\{i,j\})p(\varnothing)/[p(\{i\})p(\{j\})]$. Under
\eqref{qualitydiversity}, we obtain:
\[
p(\varnothing)=\frac{1}{\det(I+L)},
\qquad
p(\{i\})=\frac{\lambda_i}{\det(I+L)},
\qquad
p(\{j\})=\frac{\lambda_j}{\det(I+L)},
\]
so that $\lambda_i=p(\{i\})/p(\varnothing)$ is the quality of item $i$.
Moreover:
\begin{align*}
	p(\{i,j\})
	&=
	\frac{\det L_{\{i,j\}}}{\det(I+L)}
	=
	\frac{\lambda_i\lambda_j}{\det(I+L)}
	\det
	\begin{pmatrix}
		1&r_{ij}\\
		-r_{ij}&1
	\end{pmatrix}
	=
		\frac{\lambda_i\lambda_j(1+r_{ij}^{2})}{\det(I+L)}.
\end{align*}
Consequently:
\[
\frac{p(\{i,j\})p(\varnothing)}
{p(\{i\})p(\{j\})}
=
1+r_{ij}^{2} \geq 1.
\]
{This conditional odds ratio does not depend on the qualities. Hence
the singleton and pair configurations identify every $\lambda_i$ and every
$|r_{ij}|$, but not the signs of the $r_{ij}$. The marginal inclusion
probabilities $\operatorname{diag}K$ depend on both $\Lambda$ and $R$, but
the next result shows that, for any fixed $R$, $\Lambda$ can still be chosen
to match any vector of marginal inclusion probabilities.}
 
 \begin{prop}
 \label{prop:marginal-parameterization}
 For every fixed skew-symmetric matrix $R$, the map from the positive
 diagonal matrix $\Lambda$ to the marginal-inclusion vector
 $\operatorname{diag}K$ is a bijection onto $(0,1)^n$. Hence {the
 kernels in $\mathcal L_\Lambda$ with positive diagonal} may equivalently be parameterized by
 $(\Lambda,A)$ or by $(\pi,R)$, where the parameterizations are related by:
 \[
 \pi=\operatorname{diag}K,
 \qquad
 \Lambda=\Lambda(\pi,R).
 \]\end{prop}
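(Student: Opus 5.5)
The plan is to recast the map $\Lambda\mapsto\operatorname{diag}K$ as the mean map of a finite exponential family indexed by $\theta_i=\log\lambda_i$. Fix the skew-symmetric matrix $R$ and write $L=\Lambda^{1/2}(I_n+R)\Lambda^{1/2}$ with $\Lambda=\operatorname{diag}(e^{\theta_1},\ldots,e^{\theta_n})$, $\theta\in\mathbb{R}^n$. For every $s$, $\det L_s=\det(\Lambda_s)\det\bigl((I_n+R)_s\bigr)$. Writing $c_s:=\det\bigl((I_n+R)_s\bigr)$, this gives
\[
p(s)=\frac{c_s\exp(\theta'\1_s)}{\sum_{t}c_t\exp(\theta'\1_t)},
\]
where $\1_s\in\{0,1\}^n$ is the indicator vector of $s$. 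This is an exponential family with sufficient statistic $X=\1_S$, base measure $c_s$ and log-partition $\psi(\theta)=\log\det(I_n+L)$, the normalization following from Lemma~\ref{lem:principal-minor-sum}. Since $P(X_i=1)=K_{ii}$ by \eqref{marginal}, the target map is $\theta\mapsto\E_\theta[X]=\nabla\psi(\theta)$.

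The first step, which I regard as the main point to verify, is that $c_s>0$ for every $s$, so that the base measure charges the whole cube $\{0,1\}^n$. The principal submatrix $(I_n+R)_s=I+R_s$ has symmetric part $I$, which is positive definite. Its real eigenvalues are therefore nonzero: if $(I+R_s)v=0$ with $v$ real, then $v'v=-v'R_sv=0$. Its complex eigenvalues come in conjugate pairs. Alternatively, every eigenvalue has real part $1$, because $R_s$ is skew-symmetric and so has purely imaginary spectrum. Either way the determinant is strictly positive, with $c_\emptyset=1$. This is consistent with Proposition~\ref{prop:admissible-parity}(i), strengthened here to strict positivity.

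The second step is injectivity and smoothness. The function $\psi$ is real-analytic, and its Hessian is $\operatorname{Var}_\theta(X)$. Because the support is all of $\{0,1\}^n$, no nonzero $u$ makes $u'X$ almost surely constant. Hence the Hessian is positive definite, $\psi$ is strictly convex, and $\nabla\psi$ is injective. Clearly $\nabla\psi(\theta)\in(0,1)^n$, since each $K_{ii}$ is a probability strictly between $0$ and $1$ when all $p(s)>0$.

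The third step is surjectivity onto $(0,1)^n$. Fix $\pi\in(0,1)^n$ and minimize $g(\theta)=\psi(\theta)-\pi'\theta$. For a unit direction $u$, as $t\to\infty$,
\[
g(tu)=t\Bigl(\max_s u'\1_s-\pi'u\Bigr)+O(1).
\]
The bracket is strictly positive: $\max_s u'\1_s=\sum_i\max(u_i,0)$, which strictly exceeds $\pi'u$ for $u\neq0$ because each $\pi_i\in(0,1)$. By compactness of the unit sphere and convexity, $g$ is coercive, so it attains a minimum. At the minimizer, $\nabla\psi(\theta)=\pi$. Returning to $\Lambda=\operatorname{diag}(e^{\theta})$ gives the bijection $\Lambda=\Lambda(\pi,R)$, which is smooth by the inverse function theorem.

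Finally, the reparameterization is immediate. Given $(\Lambda,A)$ with $\Lambda$ positive, set $R=\Lambda^{-1/2}A\Lambda^{-1/2}$ and $\pi=\operatorname{diag}K$. Conversely, $(\pi,R)$ determines $\Lambda(\pi,R)$ and then $A=\Lambda^{1/2}R\Lambda^{1/2}$.
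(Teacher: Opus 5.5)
Your proposal is correct and is essentially the paper's proof: the paper also writes $p(s)\propto\det(I+R_s)\exp(\alpha'x_s)$ with $\alpha_i=\log\lambda_i$ and notes that the base weights are positive. It then obtains the bijection of the mean map $\alpha\mapsto\operatorname{diag}K$ onto $(0,1)^n$ by citing Wainwright and Jordan (Proposition~3.2 and Theorem~3.3) for minimal regular exponential families, whereas you prove injectivity (strict convexity of $\psi$) and surjectivity (coercivity of $\psi-\pi'\theta$) directly. One small correction: your first eigenvalue argument only shows that the real eigenvalues of $I+R_s$ are nonzero, which gives $c_s\neq0$ rather than $c_s>0$; positivity comes from your ``real part~$1$'' argument (equivalently, $(I+R_s)v=\mu v$ with $v$ real gives $\mu=1$).
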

 \begin{proof}
 See Appendix~\ref{app:proof-marginal-parameterization}.
 \end{proof}
 Thus fixing $R$ leaves the marginal inclusion probabilities unrestricted.

\section{Statistical inference}
\label{sec:inference}

Let $S_1,\ldots,S_T$ be independent and identically distributed (i.i.d.) random sets
from a DPP parameterized by $\theta$, with probability mass function:
\begin{equation}
\label{eq:parametric-dpp-mass}
  p(s;\theta)
  =
  \frac{\det L_s(\theta)}{\det(I_n+L(\theta))}.
\end{equation}
The parameter $\theta$ may index the kernels $L$, $K$, or $M$ and may {consist
of the free entries of the kernel itself, as for $\mathcal L_\Sigma$ and
$\mathcal L_\Lambda$ in Section~\ref{sec:identification}.}

\subsection{Likelihood-based estimation criteria}
We consider truncated maximum likelihood and its stochastic-gradient descent.
\subsubsection{Conditional truncated log-likelihood}
\label{sec:conditional-truncated-likelihood}
For an integer $\bar n\leq n$, let $\cS_{\bar n}$ be the set of all subsets of
$\{1,\ldots,n\}$ with cardinality $|s|$ at most $\bar n$:
\[
  \cS_{\bar n}
  =
  \{s\subseteq \{1,\ldots,n\}: |s|\leq \bar n\}.
\]
For each subset $s$, define:
\[
  Z_t(s)=\1\{S_t=s\},\qquad
  N_T(s)=\sum_{t=1}^T Z_t(s),\qquad
  \hat p_T(s)=\frac{N_T(s)}{T}.
\]
Write $\pi_{\bar n}(\theta)=\Pr_\theta\{S\in\cS_{\bar n}\}$ for the
retention probability and
$N_T=\sum_{t=1}^T\1\{S_t\in\cS_{\bar n}\}$ for the number of retained
observations. For $s\in\cS_{\bar n}$, define the conditional probability:
\[
q(s;\theta)
=\frac{p(s;\theta)}{\pi_{\bar n}(\theta)}
=\frac{\det L_s(\theta)}{\sum_{r\in\cS_{\bar n}}\det L_r(\theta)}.
\]
The conditional truncated log-likelihood is:
\begin{align}
  \ell_T(\bar n;\theta)
  &=
  \sum_{t=1}^T
  \sum_{s: |s| \leq \bar n}
  Z_t(s)
  \log q(s;\theta),
 \\
  &=
  T
  \sum_{s: |s| \leq \bar n}
  \hat p_T(s)
  \log\left[
    \frac{\det L_s(\theta)}{\sum_{s^*: |s^*| \leq \bar n}\det L_{s^*}(\theta)}
  \right].
\end{align}
The truncated normalizer need not be evaluated by enumerating all retained
subsets. Indeed,
\[
\det(I_n+zL(\theta))=\sum_{k=0}^{n}c_k(\theta)z^k.
\]
Then the denominator above is
$\sum_{k=0}^{\bar n}c_k(\theta)$, and Newton's identities give the recursion
\citep{kalman2000matrix}:
\begin{equation}
\label{eq:newton-truncated-normalizer}
c_0(\theta)=1,
c_k(\theta)
=
\frac{1}{k}\sum_{j=1}^{k}(-1)^{j-1}
c_{k-j}(\theta)\operatorname{tr}\!\left(L(\theta)^j\right),
\qquad k=1,\ldots,\bar n,
\end{equation}
where $\operatorname{tr}$ denotes the trace of a matrix.
When $\bar n=n$, the denominator is $\det(I_n+L(\theta))$, recovering the
standard log-likelihood function.

Alternatively, one may maximize the censored likelihood criterion:
\[
\ell_T^{\mathrm{cens}}(\theta)
=
\sum_{t:S_t\in\cS_{\bar n}}\log p(S_t;\theta)
+
\sum_{t:S_t\notin\cS_{\bar n}}
\log\!\left[1-\pi_{\bar n}(\theta)\right].
\]
The Newton recursion makes $\pi_{\bar n}(\theta)$ readily available.
Conditioning also has a robustness interpretation.
{Under the data-generating process, let $q_0$ denote the
conditional mass on $\cS_{\bar n}$ and $\pi_0$ the retention probability.}
The population conditional
criterion is:
\[
\pi_0\sum_{s\in\cS_{\bar n}}q_0(s)\log q(s;\theta).
\]
It depends on the retained model,
but not on the distribution beyond the cutoff. Under correct conditional
specification, it is maximized wherever $q(\cdot;\theta)=q_0$; its maximizing set is
unaffected by the true retention probability $\pi_0$, which enters only as a
positive multiplicative factor. The censored likelihood criterion also fits
$\pi_{\bar n}(\theta)$ to $\pi_0$. This additional information can improve
efficiency under correct full specification, but misspecification of the
upper tail can shift the population maximizer of the censored criterion
away from the best fit to the retained configurations. Thus $\bar n$ controls
an efficiency--tail-robustness trade-off.

\subsection{Algorithmic approach}
The maximum-likelihood estimates are computed numerically using BFGS-type
quasi-Newton algorithms, with analytic gradients and multiple starting values.

{\emergencystretch=1em
We also use one-pass ASGD to approximate the truncated maximum-likelihood estimator
(MLE). Each update follows the gradient of an observation-level log-likelihood
contribution, and the reported estimate averages the resulting iterates
\citep{polyak1992acceleration}.
Define the component criterion:\par}
\begin{equation}
  \ell(S_t,\bar n;\theta)
  =
  \1\{|S_t|\leq \bar n\}
  \log\left[
    \frac{\det L_{S_t}(\theta)}{\sum_{s^*: |s^*| \leq \bar n}\det L_{s^*}(\theta)}
  \right].
\end{equation}
The ASGD recursion is:
\begin{align}
  \hat\theta_t
  &=\hat\theta_{t-1}+\gamma_t
    \nabla_\theta\ell(S_t,\bar n;\hat\theta_{t-1}), \\
  \bar\theta_t
  &=\frac{t-1}{t}\bar\theta_{t-1}+\frac{1}{t}\hat\theta_t,
  \qquad t=1,\ldots,T,
\end{align}
with initial value $\hat\theta_0$ and learning rate $\gamma_t=\gamma t^{-c}$, where
{$\gamma>0$ and $c\in(1/2,1]$. This range ensures
$\sum_t\gamma_t=\infty$ and $\sum_t\gamma_t^2<\infty$, conditions used for
consistency.} Because the component criterion is zero outside
$\cS_{\bar n}$, $\hat\theta_t$ changes only when $|S_t|\leq\bar n$.
The averaged output $\bar\theta_T$ depends on
$(\bar n,\hat\theta_0,\gamma,c)$. Polyak--Ruppert
averaging dampens iteration noise and, under the conditions stated below,
recovers the first-order efficiency of maximum likelihood. Other schemes
include tail averaging, exponential moving averages, and stochastic weight
averaging \citep[see][for a review]{lakshmanan2026short}.

\subsection{Consistency and representative selection}

Assume the DPP model is correctly specified, with true parameter $\theta_0$.
For truncation
level $\bar n$, define:
\begin{equation}
  \Theta_0(\bar n)
  =
  \left\{
  \theta:
  q(s;\theta)=q(s;\theta_0)
  \text{ for every }s\in\cS_{\bar n}
  \right\}.
\end{equation}
This set contains the true value $\theta_0$. It is a singleton if the retained conditional
probabilities identify $\theta_0$ uniquely.

\begin{remark}[Principal-minor characterization of the truncated identified set]
\label{rem:truncated-id}
Because $\varnothing\in\cS_{\bar n}$ and
$\det L_{\varnothing}=1$, the equality of the truncated conditional masses at the
empty set implies the equality of the truncated normalizers:
\[
q(\varnothing;\theta)
=\left\{\sum_{r\in\cS_{\bar n}}\det L_r(\theta)\right\}^{-1}
=q(\varnothing;\theta_0).
\]
The equality of the other conditional masses then yields:
\[
\Theta_0(\bar n)
=\left\{\theta:\det L_s(\theta)=\det L_s(\theta_0)
\text{ for every }s\in\cS_{\bar n}\right\}.
\]
Thus truncation identifies the retained principal minors themselves.
{In $\mathcal L_\Lambda$, parameterized by
$\theta=(\lambda_i,a_{ij}:i<j)$, any $\bar n\geq2$ retains the singleton and
pair configurations used in Section~\ref{sec:diag-skew}. Hence
$\Theta_0(\bar n)$ is finite, with at most $2^{n(n-1)/2}$ sign patterns of
the $a_{ij}$. Triples add no sign information: odd-order skew-symmetric
minors vanish, so
$\det L_{\{i,j,k\}}=\lambda_i\lambda_j\lambda_k+\lambda_ia_{jk}^2
+\lambda_ja_{ik}^2+\lambda_ka_{ij}^2$, and $\Theta_0(2)=\Theta_0(3)$. Under
the conditions of Proposition~\ref{lda}, which uses only minors of order at
most four, $\Theta_0(\bar n)=\Theta_0(n)$ for every $4\leq\bar n\leq n$: both
consist of the parameters in $\Theta$ of the $2^n$ kernels of that
proposition.}
\end{remark}

{The truncated MLE may be set-valued:}
\[
  \hat\Theta_T(\bar n)
  =
  \operatorname*{arg\,max}_{\theta\in\Theta}
  \ell_T(\bar n;\theta).
\]
{For a fixed data sequence and fixed
$(\bar n,\hat\theta_0,\gamma,c)$, the ASGD recursion produces a single
sequence of parameter estimates.}

\begin{prop}[Consistency]
\phantomsection
\label{prop:consistency}
{
Fix $\bar n$ and define the population criterion:
\[
  \ell_\infty(\bar n;\theta)
  =\E_{\theta_0}[\ell(S_t,\bar n;\theta)].
\]
\begin{enumerate}[label=(\roman*)]
\item \textit{(Truncated MLE)} Suppose $\Theta$ is compact and
$\ell_\infty(\bar n;\theta)$ is finite and continuous on $\Theta$. Then
\[
  \sup_{\tilde\theta\in\hat\Theta_T(\bar n)}
  \inf_{\theta^\star\in\Theta_0(\bar n)}
  \|\tilde\theta-\theta^\star\|\longrightarrow0
  \qquad\text{almost surely as }T\to\infty.
\]
\item \textit{(ASGD)} Fix
$\theta_0^\star\in\Theta_0(\bar n)\cap\operatorname{int}(\Theta)$ and define:
\[
  \mathcal H
  =-\partial_{\theta\theta'}^2
    \ell_\infty(\bar n;\theta_0^\star).
\]
Suppose $q(s;\theta)$ is positive and twice continuously differentiable near
$\theta_0^\star$ for every $s\in\cS_{\bar n}$, and $\mathcal H$ is positive
definite. For $\gamma_t=\gamma t^{-c}$, with $\gamma>0$ and
$c\in(1/2,1]$, there exists $r>0$ such that
$B_r=\{\theta:\|\theta-\theta_0^\star\|\leq r\}
\subset\operatorname{int}(\Theta)$ and the following holds: if the raw
iterates are almost surely eventually contained in $B_r$, or if every update
is projected onto $B_r$, then
$\hat\theta_T\to\theta_0^\star$ and $\bar\theta_T\to\theta_0^\star$ almost
surely as $T\to\infty$.
\end{enumerate}
}
\end{prop}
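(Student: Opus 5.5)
For part (i) I will use the classical Wald argument for set-valued extremum estimators. The sample criterion is $T^{-1}\ell_T(\bar n;\theta)=\sum_{s\in\cS_{\bar n}}\hat p_T(s)\log q(s;\theta)$, a finite linear combination of the empirical frequencies $\hat p_T(s)$. By the strong law, $\hat p_T(s)\to p(s;\theta_0)$ almost surely for each of the finitely many $s\in\cS_{\bar n}$. Hence
\[
\sup_{\theta\in\Theta}\bigl|T^{-1}\ell_T(\bar n;\theta)-\ell_\infty(\bar n;\theta)\bigr|\le\sum_{s\in\cS_{\bar n}}|\hat p_T(s)-p(s;\theta_0)|\sup_{\theta}|\log q(s;\theta)|.
\]
Some care is needed because $\log q(s;\theta)$ may equal $-\infty$ at $\theta$ where $\det L_s(\theta)=0$. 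For $s$ with $p(s;\theta_0)>0$, finiteness and continuity of $\ell_\infty$ on compact $\Theta$ force $\log q(s;\theta)$ to be finite and continuous, hence bounded. For $s$ with $p(s;\theta_0)=0$, $\hat p_T(s)=0$ almost surely, so these terms vanish identically. This yields almost sure uniform convergence.

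Next comes identification of the population maximizers. Write $\ell_\infty=\pi_0\sum_s q_0(s)\log q(s;\theta)$. Gibbs' inequality shows it is maximized exactly where $q(\cdot;\theta)=q_0$ on the support, and that is the set $\Theta_0(\bar n)$ (Remark~\ref{rem:truncated-id}). Fix $\varepsilon>0$. The set $\{\theta\in\Theta:\operatorname{dist}(\theta,\Theta_0(\bar n))\ge\varepsilon\}$ is compact, so by continuity $\ell_\infty$ attains on it a maximum strictly below $\ell_\infty(\theta_0)$. Call the gap $\eta$. Uniform convergence then implies that, eventually, every sample maximizer has criterion within $\eta/2$ of $\ell_\infty(\theta_0)$. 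Such a maximizer cannot lie in the $\varepsilon$-separated set, which gives the stated Hausdorff-type convergence.

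For part (ii) I will use a local Robbins--Siegmund supermartingale argument. The first step is to choose $r$ so that $q(s;\cdot)$ is positive and $C^2$ on $B_r$. I also choose $r$ so that $\theta_0^\star$ is the unique zero of $h(\theta)=\nabla\ell_\infty(\bar n;\theta)$ in $B_r$, with $(\theta-\theta_0^\star)'h(\theta)\le-\tfrac{\mu}{2}\|\theta-\theta_0^\star\|^2$ there, where $\mu$ is the smallest eigenvalue of $\mathcal H$. This is possible by a Taylor expansion around $\theta_0^\star$, using $h(\theta_0^\star)=0$ because $\theta_0^\star$ is an interior maximizer. On $B_r$ the score $\nabla_\theta\ell(S_t,\bar n;\theta)$ takes finitely many values, each continuous, so it is uniformly bounded by some $G$. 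Write the update as $\hat\theta_t=\hat\theta_{t-1}+\gamma_t[h(\hat\theta_{t-1})+\xi_t]$, where $\xi_t$ is a bounded martingale difference with respect to $\mathcal F_{t-1}=\sigma(S_1,\dots,S_{t-1})$ because the $S_t$ are i.i.d. Let $V_t=\|\hat\theta_t-\theta_0^\star\|^2$. In the projected case, nonexpansiveness of projection onto the convex ball $B_r$ gives
\[
\E[V_t\mid\mathcal F_{t-1}]\le V_{t-1}-\mu\gamma_tV_{t-1}+\gamma_t^2G^2.
\]
Since $\sum\gamma_t^2<\infty$, Robbins--Siegmund shows that $V_t$ converges and that $\sum\gamma_tV_{t-1}<\infty$. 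Together with $\sum\gamma_t=\infty$, this forces $V_t\to0$ almost surely.

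In the unprojected case, the iterates are almost surely eventually in $B_r$. On the event that they remain in $B_r$ after time $t_0$, the same inequality holds, applied to the stopped process $V_{t\wedge\tau}$ with $\tau$ the exit time. Taking the union over $t_0$ then covers the whole probability-one event. Convergence of the Polyak average $\bar\theta_T$ follows from Cesàro's lemma. The main obstacle is this localization: making the drift inequality rigorous without global concavity, which fails because of the non-identified sign copies in $\Theta_0(\bar n)$. I will handle it by working only inside $B_r$ and through the stopping-time construction just described, with $r$ small enough to exclude the other points of $\Theta_0(\bar n)$.
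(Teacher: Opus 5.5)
Your proposal is correct and follows the paper's proof essentially step for step. For (i), this means a Wald-type argument: almost-sure uniform convergence over the finitely many retained configurations, identification of the maximizers by the Kullback--Leibler (Gibbs) inequality, and compact separation. For (ii), it means a local Robbins--Siegmund drift inequality on $B_r$, nonexpansiveness of the projection, stopping-time localization for the unprojected case, and Ces\`aro averaging. The only slip is in (i): continuity of $\ell_\infty$ does not by itself make each $\log q(s;\cdot)$ continuous. The boundedness you need follows directly from $\min_\Theta\ell_\infty\le p_0(s)\log q(s;\theta)\le 0$ for $s$ in the support, which is how the paper argues.
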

\begin{proof}
See Appendix~\ref{app:proof-consistency}.
\end{proof}

{
{Part~(i) is outer consistency: every maximizer approaches
$\Theta_0(\bar n)$, but need not converge to a particular point of it. When
$\Theta_0(\bar n)$ is a singleton, every measurable selection from
$\hat\Theta_T(\bar n)$ is consistent.\par}

{Part~(ii) is local, and a global version would not deliver
consistency. The usual global guarantee for stochastic gradient methods is
convergence to a stationary point of the population criterion, and here such
points can lie outside $\Theta_0(\bar n)$. In $\mathcal L_\Lambda$,
$\det(\Lambda_s+A_s)=\det(\Lambda_s-A_s)$ by transposition, so every
skew-score coordinate vanishes at $A=0$: a recursion started at independence
($A=0$) never leaves it. If $\bar n\geq2$ and $A_0\neq0$, the retained
singleton and pair minors imply that the manifold $\{A=0\}$ is disjoint from
$\Theta_0(\bar n)$. Along $A=0$, writing $\alpha_i=\log\lambda_i$ yields the
regular conditional exponential family:
\[
  q_\alpha^{\mathrm{diag}}(s)
  \propto\exp\!\left\{\sum_i\alpha_i\1\{i\in s\}\right\},
  \qquad s\in\cS_{\bar n}.
\]
If the population criterion restricted to this family has a maximizer
$(\alpha^\star,0)\in\operatorname{int}(\Theta)$, its diagonal score is zero;
together with the vanishing skew score, this makes it a stationary point of
the population criterion outside $\Theta_0(\bar n)$.\par}

{ASGD is locally consistent within the attraction basin of an
isolated regular representative, so with several regular representatives the
starting value selects one. Along a continuum of observationally equivalent
parameters, each update is orthogonal to the set of parameters equivalent to
the current iterate. In Section~\ref{sec:simulation-identification}, where
this set is a coordinate direction, the unidentified coordinates therefore
keep their starting values.\par}

}

\subsection{Asymptotic distribution}

Asymptotic normality requires the estimator to converge pointwise to a selected
representative $\theta_0^\star$ of $\Theta_0(\bar n)$ in the interior of the
parameter set. {The following result applies when this
representative is first-order regular, in the sense that the Jacobian of the
conditional probability map has full column rank at $\theta_0^\star$.}

The conditional Fisher information is:
\[
  \mathcal I_{\bar n}(\theta)
  =\E_\theta\!\left[\nabla_\theta\log q(S;\theta)\,
    \nabla_\theta\log q(S;\theta)'\mid S\in\cS_{\bar n}\right]
  =\sum_{s\in\cS_{\bar n}}\frac{1}{q(s;\theta)}
    \nabla_\theta q(s;\theta)
    \nabla_\theta q(s;\theta)' .
\]

\begin{prop}[Regular pointwise asymptotic normality]
\label{prop:asnorm}
{
Assume the DPP model is correctly specified and fix
$\theta_0^\star\in\Theta_0(\bar n)\cap\operatorname{int}(\Theta)$.
Write
$\mathcal H=-\partial_{\theta\theta'}^2
\ell_\infty(\bar n;\theta_0^\star)$. Suppose
$q(s;\theta)$ is positive and three times continuously
differentiable in a neighborhood of $\theta_0^\star$ for every
$s\in\cS_{\bar n}$, and that
$\mathcal I_{\bar n}(\theta_0^\star)$ is nonsingular. Let $\tilde\theta_T$
be one of the following:
\begin{enumerate}[label=(\alph*)]
\item a measurable selection from $\hat\Theta_T(\bar n)$ satisfying
$\tilde\theta_T\xrightarrow{p}\theta_0^\star$; or
\item the ASGD average
$\bar\theta_T(\hat\theta_0,\bar n,\gamma,c)$, whose raw iterates satisfy
$\hat\theta_T\to\theta_0^\star$ almost surely, with
$\gamma_t=\gamma t^{-c}$, $\gamma>0$, and $c\in(1/2,1)$. If projection is
used, suppose $\theta_0^\star$ lies in the interior of the projection set.
\end{enumerate}
Then
\begingroup
\setlength{\abovedisplayskip}{4pt}
\setlength{\abovedisplayshortskip}{4pt}
\setlength{\belowdisplayskip}{4pt}
\setlength{\belowdisplayshortskip}{4pt}
\begin{align}
  \sqrt{T}\,\bigl(\tilde\theta_T-\theta_0^\star\bigr)
  &\xrightarrow{d}
  N\!\left(0,\;
    [\pi_{\bar n}(\theta_0)\mathcal I_{\bar n}(\theta_0^\star)]^{-1}\right),
    \label{eq:prop6-collapsed}\\
  \sqrt{N_T}\,\bigl(\tilde\theta_T-\theta_0^\star\bigr)
  &\xrightarrow{d}
  N\!\left(0,\;\mathcal I_{\bar n}(\theta_0^\star)^{-1}\right).
    \label{eq:prop6-retained}
\end{align}
\endgroup
}
\end{prop}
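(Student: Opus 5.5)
The plan is to treat both estimators as asymptotically linear in the same influence function and then apply the multivariate central limit theorem. Write $g(S;\theta)=\nabla_\theta\ell(S,\bar n;\theta)=\1\{S\in\cS_{\bar n}\}\nabla_\theta\log q(S;\theta)$. First I compute the population quantities at $\theta_0^\star$.

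Because $\theta_0^\star\in\Theta_0(\bar n)$, we have $q(\cdot;\theta_0^\star)=q(\cdot;\theta_0)$ on $\cS_{\bar n}$. Hence $\E_{\theta_0}[g(S;\theta_0^\star)]=\pi_{\bar n}(\theta_0)\sum_{s\in\cS_{\bar n}}\nabla_\theta q(s;\theta_0^\star)=0$, since $\sum_s q(s;\theta)\equiv1$ on $\cS_{\bar n}$. The same equality of conditional masses gives the information identity
\[
\operatorname{Var}_{\theta_0}\{g(S;\theta_0^\star)\}=\pi_{\bar n}(\theta_0)\mathcal I_{\bar n}(\theta_0^\star).
\]
Differentiating twice and using $\sum_s\nabla^2 q=0$ gives
\[
\mathcal H=-\pi_{\bar n}(\theta_0)\sum_s q\,\nabla^2\log q=\pi_{\bar n}(\theta_0)\mathcal I_{\bar n}(\theta_0^\star).
\]
This is nonsingular by assumption. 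The score is bounded, because $\cS_{\bar n}$ is finite and $q>0$ near $\theta_0^\star$. The Lindeberg--L\'evy CLT then gives
\[
T^{-1/2}\sum_t g(S_t;\theta_0^\star)\xrightarrow{d}N(0,\pi_{\bar n}(\theta_0)\mathcal I_{\bar n}(\theta_0^\star)).
\]

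\emph{Case (a).} Since $\tilde\theta_T\to\theta_0^\star$ in probability and $\theta_0^\star$ is interior, with probability tending to one $\tilde\theta_T$ is an interior maximizer, so $\nabla\ell_T(\bar n;\tilde\theta_T)=0$. A mean-value expansion gives
\[
0=T^{-1/2}\sum_t g(S_t;\theta_0^\star)-\hat{\mathcal H}_T\sqrt T(\tilde\theta_T-\theta_0^\star),
\]
where $\hat{\mathcal H}_T$ is minus the averaged Hessian evaluated at intermediate points, taken row by row. The Hessian of $\log q(s;\cdot)$ is continuous on a neighbourhood and there are finitely many $s$. Because $\hat p_T(s)\to p_{\theta_0}(s)$ almost surely, $\hat{\mathcal H}_T\to\mathcal H$ in probability. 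Slutsky then yields the limit $\mathcal H^{-1}N(0,\mathcal H)=N(0,\mathcal H^{-1})$, which is \eqref{eq:prop6-collapsed}.

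\emph{Case (b).} I would invoke the Polyak--Juditsky theorem \citep{polyak1992acceleration} for averaged Robbins--Monro recursions with $c\in(1/2,1)$. Its hypotheses are:
\begin{itemize}[label={}]
\item a mean field $h(\theta)=\E g(S;\theta)$ that is $C^2$ near $\theta_0^\star$, with $h(\theta_0^\star)=0$ and Jacobian $-\mathcal H$ Hurwitz (it is negative definite);
\item martingale-difference noise $\xi_t=g(S_t;\hat\theta_{t-1})-h(\hat\theta_{t-1})$ with conditional covariance continuous in $\theta$ and bounded fourth moments, which holds since the score is uniformly bounded on a compact neighbourhood;
\item almost sure convergence of $\hat\theta_t$.
\end{itemize}
The only delicacy is localization. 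Because $\hat\theta_t\to\theta_0^\star$ almost surely, I replace $g$ outside a small ball by a smooth bounded extension. The modified and actual recursions coincide eventually on each sample path, and a finite number of early iterates contributes $O(1/T)$ to $\bar\theta_T$, hence $o_p(T^{-1/2})$. When projection is used, $\theta_0^\star$ is interior to the projection set, so projection is eventually inactive almost surely, and the same argument applies.

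The theorem gives
\[
\sqrt T(\bar\theta_T-\theta_0^\star)=\mathcal H^{-1}T^{-1/2}\sum_t\xi_t+o_p(1),
\]
with limit $N(0,\mathcal H^{-1}V\mathcal H^{-1})$ where $V=\pi_{\bar n}(\theta_0)\mathcal I_{\bar n}(\theta_0^\star)=\mathcal H$. This gives the same limit, \eqref{eq:prop6-collapsed}. I expect this step, checking the Polyak--Juditsky conditions and handling localization, to be the main obstacle; the first thing I would try is exactly the bounded smooth extension of the score described above.

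Finally, for \eqref{eq:prop6-retained}: $N_T/T\to\pi_{\bar n}(\theta_0)>0$ almost surely by the strong law of large numbers. The retention probability is positive because $\varnothing\in\cS_{\bar n}$ and $p(\varnothing)>0$. Then
\[
\sqrt{N_T}(\tilde\theta_T-\theta_0^\star)=\sqrt{N_T/T}\,\sqrt T(\tilde\theta_T-\theta_0^\star),
\]
and Slutsky's lemma gives the limit $N(0,\mathcal I_{\bar n}(\theta_0^\star)^{-1})$ for both estimators.
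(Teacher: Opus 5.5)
Your proposal follows essentially the same route as the paper: equality of the retained conditional laws gives $\mathcal H=\operatorname{Var}(\text{score})=\pi_{\bar n}(\theta_0)\mathcal I_{\bar n}(\theta_0^\star)$, case (a) uses a Taylor expansion of the score equation, case (b) uses the local Polyak--Juditsky averaging argument with eventually inactive projection, and Slutsky's lemma with $N_T/T\to\pi_{\bar n}(\theta_0)$ finishes. One step needs tightening: a modified recursion started at time $0$ need not eventually coincide with the actual one, since the two paths generally separate once the actual iterates enter the region where your extension differs from the score, and nothing makes them merge again. The standard repair is to start the modified recursion at a fixed time $k$ from $\hat\theta_k$, choose an extension whose mean field is globally stable (e.g.\ equal to $-\kappa(\theta-\theta_0^\star)$ away from $\theta_0^\star$) so that Polyak--Juditsky applies verbatim, and note that the two paths agree on $\{\hat\theta_t\in B \text{ for all } t\geq k\}$, whose probability tends to one as $k\to\infty$.
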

\begin{proof}
See Appendix~\ref{app:proof-asnorm}.
\end{proof}
{Proposition~\ref{prop:consistency}(ii) permits $c\in(1/2,1]$ for consistency.
Proposition~\ref{prop:asnorm} requires $c<1$, so that $t\gamma_t\to\infty$ and
its limiting covariance is independent of $\gamma$.
At $c=1$, root-$T$ normality may still hold under an
additional condition such as $\gamma\lambda_{\min}(\mathcal H)>1/2$, but
the limiting covariance generally depends on $\gamma$.}

Since $N_T/T\to\pi_{\bar n}(\theta_0)$, the calendar-time and retained-sample
normalizations in \eqref{eq:prop6-collapsed}--\eqref{eq:prop6-retained} give
equivalent limits. MLE and ASGD have the
same first-order limit, although their finite-sample behavior may differ
substantially.

The truncated MLE is asymptotically efficient for the conditional experiment
based on the retained observations. For a fixed cutoff, its calendar-time
covariance is:
\[
  V_{\bar n}^{(T)}
  =
  [\pi_{\bar n}(\theta_0)
  \mathcal I_{\bar n}(\theta_0^\star)]^{-1}.
\]
If every cutoff is evaluated at the same regular representative
$\theta_0^\star=\theta_0$ and in the same coordinates, then
$V_{\bar n}^{(T)}$ is nonincreasing in $\bar n$ under the Loewner order.

If an estimator does not converge to a single representative, distributional
inference must also account for representative selection. For fixed algorithmic
choices and a specified data-generating parameter, the finite-sample
distribution of ASGD can nevertheless be studied by simulation.

\subsection{Score inference}

\paragraph{Regular points.}
For an interior parameter value $\theta$, define the retained score as:
\begin{equation}\label{eq:prop6-score-vector}
  U_T(\theta)
  =
  \sum_{t=1}^T
  \1\{S_t\in\cS_{\bar n}\}\,
  \nabla_\theta\log q(S_t;\theta).
\end{equation}
If $\mathcal I_{\bar n}(\theta_0)$ is nonsingular under the fixed interior
null hypothesis $\theta=\theta_0$, then:
\begin{equation}\label{eq:prop6-score}
  LM_T(\bar n;\theta_0)
  =
  \frac{1}{N_T}\,
  U_T(\theta_0)'\,
  \mathcal I_{\bar n}(\theta_0)^{-1}\,
  U_T(\theta_0)
  \xrightarrow{d}
  \chi^2_{\dim\Theta}.
\end{equation}
Unlike a Wald statistic, \eqref{eq:prop6-score} evaluates the score and
information at the null and requires no unrestricted estimator. Inverting the
test gives the level-$(1-\alpha)$ score confidence region:
\begin{equation}\label{eq:prop6-score-region}
  \mathcal C_{1-\alpha}
  =
  \left\{
    \theta\in\Theta:
    LM_T(\bar n;\theta)
    \leq \chi^2_{\dim\Theta,\,1-\alpha}
  \right\}.
\end{equation}

\paragraph{Constant-rank partial identification.}
If $\mathcal I_{\bar n}(\theta_0)$ is singular but has locally constant rank
$r_{\bar n}$ near $\theta_0$, inference can instead be restricted to the
first-order score directions:
\begin{equation}\label{eq:prop6-score-partial}
  LM_T^+(\bar n;\theta_0)
  =
  \frac{1}{N_T}\,
  U_T(\theta_0)'\,
  \mathcal I_{\bar n}(\theta_0)^+\,
  U_T(\theta_0)
  \xrightarrow{d}
  \chi^2_{r_{\bar n}},
\end{equation}
where $A^+$ is the Moore--Penrose pseudoinverse and
$r_{\bar n}=\operatorname{rank}\mathcal I_{\bar n}(\theta_0)$. Under the
locally constant-rank assumption, the constant-rank theorem gives:
\[
  \operatorname{Null}\mathcal I_{\bar n}(\theta_0)
  =
  T_{\theta_0}\Theta_0(\bar n),
  \qquad
  r_{\bar n}
  =\dim\Theta-\dim\Theta_0(\bar n).
\]

\paragraph{Rank-changing singularities.}
The preceding result does not cover points such as $A_0=0$ in {$\mathcal L_\Lambda$}. As discussed after
Proposition~\ref{prop:consistency}, every skew-direction score vanishes at
$A=0$, so the conditional information is singular. The first-order score
therefore contains no information about the skew coordinates. Inference for
these higher-order-identified directions would require a nonregular analysis, even
when a score statistic for the remaining first-order directions has a
chi-square limit.

\section{Monte Carlo evidence}
\label{sec:simulation-study}

This section reports Monte Carlo evidence on the identification and estimation
results developed above. The first experiment examines the finite
sign-and-transpose equivalence class, diagonal-similarity underidentification,
estimation of identified coordinates, and constant-rank score inference. The fixed-kernel experiments use
a regular point-identified design to study truncated maximum likelihood,
retained-sample score inference, and averaged stochastic gradient descent
(ASGD). A matched experiment in Online
Appendix~\ref{sec:simulation-contextual-asgd} considers ASGD when observed
covariates make the DPP kernel specific to each observation.
Additional Monte Carlo results for a stabilized ASGD specification are
reported in Online Appendix~\ref{app:stabilized-asgd}.

\subsection{Experiment 1: Identification and estimation of identified coordinates}
\label{sec:simulation-identification}

This experiment first enumerates the finite sign-and-transpose equivalence
class in Proposition~\ref{lda} and verifies the proposed normalizations. It
illustrates the continuous diagonal-similarity underidentification and
evaluates the estimation of the identified coordinates. We set \(n=4\), define
the design points as:
\[
  x=(-1.5,-0.5,0.5,1.5)',
\]
and let \(\bar A\) be skew-symmetric with upper-triangular entries:
\[
  (\bar a_{12},\bar a_{13},\bar a_{14},
    \bar a_{23},\bar a_{24},\bar a_{34})
  =(0.30,0.20,0.10,0.25,-0.15,0.35).
\]
Set:
\[
  L_0=\operatorname{diag}\{\exp(0.2x_i)\}_{i=1}^4+1.5\bar A.
\]
The kernel has no zero off-diagonal entries, and the smallest absolute determinant of
the three \(2\)-by-\(2\) cross-blocks in \eqref{simplified} is \(0.12375\).
Enumerating the kernel sets:
\[
  DL_0D^{-1}\quad\text{and}\quad DL_0'D^{-1},
  \qquad
  D=\operatorname{diag}(1,d_2,d_3,d_4),\quad d_j\in\{-1,1\},
\]
produces \(16=2^4\) distinct kernels. The computed principal minors and
subset probabilities agree across all members of this class to machine precision.
Imposing \(L_{1j}>0\) leaves one representative in each of the two kernel sets,
one generated from \(L_0\) and the other from \(L_0'\);
adding the condition \(L_{23}>0\) leaves one overall. Thus the
normalizations in Proposition~\ref{lda} reduce the computed class
\(16\longrightarrow2\longrightarrow1\).

To illustrate the continuous underidentification, we use the parameterization:
\begin{equation}
  \begin{aligned}
    L(\beta,\delta)
    &=
    B(\delta)
    \left[
      \operatorname{diag}\{\exp(\alpha_0+\alpha_1x_i)\}_{i=1}^4
      +e^\eta\bar A
    \right]
    B(\delta)^{-1}, \\
    B(\delta)
    &=
    \operatorname{diag}(1,e^{\delta_2},e^{\delta_3},e^{\delta_4}),
  \end{aligned}
  \label{eq:simulation-similarity-orbit}
\end{equation}
where \(\beta=(\alpha_0,\alpha_1,\eta)'\) and
\(\delta\in\Delta=[-0.03,0.03]^3\). For fixed \(\beta\), varying \(\delta\)
gives a three-dimensional family of observationally equivalent kernels.
At \(\beta=\beta_0\), this family contains \(L_0\). Because
diagonal similarity preserves every
principal minor, the full and truncated likelihoods are independent of
\(\delta\). We set
\(\beta_0=(0,0.2,\log1.5)'\) and restrict \(\beta\) to a compact box.\footnote{The box is
\([-0.75,0.75]\times[-0.30,0.70]\times[\log0.3,\log2.5]\).
A Gershgorin bound gives a uniform lower eigenvalue bound \(0.0452\) for
the symmetric part of \(L(\beta,\delta)\), so the entire experimental
parameter space remains in {\(\mathcal L_\Sigma\)}.}

In this identification experiment, we retain sets with
\(|S|\leq2\), for which the exact retention probability is \(0.61336\).
{Because diagonal similarity preserves principal minors,
Remark~\ref{rem:truncated-id} applies: the retained singleton and pair
configurations identify \(\lambda_i=\exp(\alpha_0+\alpha_1x_i)\) and, since
\(\bar A\) is fixed and \(\rho=e^\eta>0\), also \(\rho\).}
Hence
within this six-parameter experimental family:
\[
  \Theta_0(2)=\{\beta_0\}\times\Delta,
  \qquad
  \widehat\Theta_T(2)=\{\widehat\beta_T\}\times\Delta.
\]
Because the likelihood is constant in \(\delta\in\Delta\), the estimation accuracy is
determined entirely by the identified coordinates \(\beta\). We summarize it
using:
\[
  e_T=\|\widehat\beta_T-\beta_0\|,
\]
where \(\|\cdot\|\) denotes the Euclidean norm. Across \(N_{\mathrm{MC}}\) Monte Carlo
replications, the corresponding vector root mean squared error (RMSE) is:
\[
  \operatorname{RMSE}(\widehat\beta_T)
  =
  \left\{
    \frac{1}{N_{\mathrm{MC}}}\sum_{m=1}^{N_{\mathrm{MC}}}
    \|\widehat\beta_{T,m}-\beta_0\|^2
  \right\}^{1/2}.
\]
The six-dimensional conditional information matrix has locally constant rank three,
with three zero eigenvalues corresponding to the unidentified coordinates and
three positive eigenvalues corresponding to the identified coordinates.

\begin{table}[htbp]
  \centering
  \small
  \begin{tabular}{lrrrrr}
    \hline
    \(T\) & Mean \(N_T\) & Median \(e_T\) & \(\operatorname{RMSE}(\widehat\beta_T)\)
      & \(\chi^2_3\) size & \(\chi^2_6\) size \\
    \hline
      500 &  306.9 & 0.327 & 0.753 & 0.050 & 0.007 \\
     2000 & 1226.9 & 0.172 & 0.390 & 0.051 & 0.004 \\
     8000 & 4906.5 & 0.079 & 0.126 & 0.054 & 0.007 \\
    \hline
  \end{tabular}
  \caption{Estimation of identified coordinates and rank-score inference.
  The table uses 2,000 MLE replications and 5,000 independent score replications
  per row. The penultimate column uses the critical value with three degrees of
  freedom, equal to the information rank; the final column uses six degrees of
  freedom, equal to the ambient parameter dimension.}
  \label{tab:simulation-identification}
\end{table}

Table~\ref{tab:simulation-identification} reports estimation accuracy for the
identified coordinates and score-test size. Numerical optimization succeeds in every replication. The compact-box
constraint on \(\beta\) is active in \(17.9\%\), \(2.45\%\), and \(0.0\%\) of
the MLE replications as \(T\) increases. The estimation error in the identified
coordinates declines with \(T\), while the likelihood leaves the unidentified
coordinates unrestricted. The rank-three score test has size close to its
nominal level. Using the six-dimensional ambient parameter space instead
produces a severely conservative test.

To illustrate representative selection, we apply ASGD to a common
\(T=64{,}000\) sequence using one \(\beta\) start and three \(\delta\) starts.
The identified coordinates move toward their true values, while the
unidentified coordinates remain at their starting values. Different starts
therefore produce different kernels with numerically identical
subset probabilities.

\subsection{Experiment 2: Point-identified simulation designs}
\label{sec:simulation-design}
\label{sec:simulation-mle-score}

We next consider regular, point-identified designs that illustrate the
consistency and asymptotic-normality results in
Propositions~\ref{prop:consistency} and \ref{prop:asnorm}.
We set \(n=8\) and consider three truncation thresholds
\(\bar n\in\{4,5,8\}\), where \(\bar n=8\) corresponds to the full likelihood.
The data-generating process is a
three-parameter submodel of \eqref{qualitydiversity}. Define the standardized
covariate values as:
\[
  x_i
  =
  \frac{i-(n+1)/2}{\sqrt{(n^2-1)/12}},
  \qquad i=1,\ldots,n,
\]
and define the skew-symmetric matrix \(\widetilde R_0\) by:
\[
  (\widetilde R_0)_{ij}
  =
  \begin{cases}
    \operatorname{sign}(j-i)(1/2)^{|i-j|-1},&i\ne j,\\
    0,&i=j.
  \end{cases}.
\]
We set \(R_0=\widetilde R_0/\|\widetilde R_0\|_2\) and use:
\begin{equation}
  L(\theta)
  =
  D(\alpha)(I_n+\rho R_0)D(\alpha),
  \qquad
  D(\alpha)
  =
  \operatorname{diag}
  \left\{\exp\left(\frac{\alpha_0+\alpha_1x_i}{2}\right)\right\}_{i=1}^n,
  \label{eq:simulation-kernel}
\end{equation}
where \(\theta=(\alpha_0,\alpha_1,\eta)'\) and
\(\rho=\exp(\eta)>0\). The restriction \(\rho>0\) selects one of the two
observationally equivalent sign choices.

We fix \(\alpha_{1,0}=0.4\) and calibrate \(\alpha_{0,0}\) to the target
expected cardinality. For subset \(s\):
\[
  \det L_s
  =
  \exp\left\{\alpha_0|s|+\alpha_1\sum_{i\in s}x_i\right\}
  \det(I_n+\rho R_0)_s.
\]
The baseline design sets \(\rho_0=0.60\) and \(\E|S|=2.5\). The dense design
increases expected cardinality while holding dependence fixed, whereas the
weak-dependence design reduces \(\rho_0\) while approximately preserving the
baseline cardinality distribution. The baseline and dense designs have the same
\(\alpha_1\) and \(\rho\), but the dense value of \(\alpha_0\) is larger by
\(1.049\). Holding the other weight components fixed, this difference
multiplies the relative weight of each additional selected item by
\(\exp(1.049)\simeq2.85\), shifting probability toward larger sets. This is
the source of the lower retention probability in the dense design at a common
cardinality threshold. The weak-dependence design sets \(\rho_0=0.15\),
placing it near independence. Table~\ref{tab:simulation-dgp} reports the exact retention
probabilities \(\pi_{\bar n}=\Pr_{\theta_0}\{|S|\leq\bar n\}\).

\begin{table}[htbp]
  \centering
  \small
  \begin{tabular}{lrrrrr}
    \hline
    Design & \(\alpha_{0,0}\) & \(\rho_0\)
           & \(\E|S|\) & \(\pi_4\) & \(\pi_5\) \\
    \hline
    Baseline & \(-0.863\) & \(0.60\) & \(2.5\) & \(0.933\) & \(0.986\) \\
    Dense    & \( 0.186\) & \(0.60\) & \(4.5\) & \(0.494\) & \(0.759\) \\
    Weak     & \(-0.821\) & \(0.15\) & \(2.5\) & \(0.935\) & \(0.987\) \\
    \hline
  \end{tabular}
  \caption{Data-generating processes and exact retention probabilities}
  \label{tab:simulation-dgp}
\end{table}

For \(n=8\), all \(2^8=256\) subset probabilities and all conditional
information matrices are evaluated exactly. The main experiment uses
\(T\in\{250,1{,}000,4{,}000\}\) with 2,000 replications per cell. The design includes moderate truncation,
severe truncation in the dense case, and the full likelihood. Within each replication,
a single length-\(4{,}000\) sequence generates the nested samples for the three
values of \(T\) and all truncation levels. The score experiment uses an
independent set of 2,000 replications per cell.

An independent sample of 250 observations supplies the starting value and is
excluded from the reported \(T\). We initialize the quality coefficients by
regressing log singleton-to-empty-set frequency ratios on the fixed covariate
and estimate the dependence magnitude from pair frequencies, adding a one-half
pseudocount to each configuration count.\footnote{For the dependence parameter,
we use the upper median of the positive pair-based estimates of \(\rho^2\),
or \(\rho^2=0.04\) if none is positive.}
Numerical optimization and projected ASGD use
\((\alpha_0,\alpha_1,\eta)\in[-8,8]\times[-4,4]\times[-20,3]\).
For MLE, the starting value is
one of three numerical starts; the optimization uses the exact truncated score and
retains the solution with the highest objective. For ASGD, it is the initial
iterate. Each ASGD run is streaming and single-pass: \(T\) counts calendar
observations, \(N_T\) counts retained score updates, and each observation is
used once.

Table~\ref{tab:simulation-mle-score} compares truncated MLE accuracy,
asymptotic efficiency, and retained-sample score-test size in the regular
point-identified setting. Trace efficiency
is the ratio of the traces of the full- and truncated-likelihood asymptotic
covariance matrices, both on the calendar-time scale in
\eqref{eq:prop6-collapsed}. It captures both retention and conditional
information per retained observation. Truncation is most costly in the dense
design, where lower retention is accompanied by larger efficiency losses and
estimation error.

\begin{table}[htbp]
  \centering
  \small
  \begin{tabular}{lcrrrrr}
    \hline
    Design & \(\bar n\) & Mean \(N_T\)
      & RMSE\((\widehat\rho)\)
      & Trace efficiency
      & Regular share
      & Score size at \(5\%\) \\
    \hline
    Baseline & 4 & 3730.0 & 0.180 & 0.745 & 0.985 & 0.060 \\
             & 5 & 3945.5 & 0.159 & 0.902 & 0.989 & 0.058 \\
             & 8 & 4000.0 & 0.150 & 1.000 & 0.993 & 0.058 \\
    \hline
    Dense    & 4 & 1976.0 & 0.216 & 0.417 & 0.959 & 0.044 \\
             & 5 & 3037.9 & 0.163 & 0.690 & 0.988 & 0.045 \\
             & 8 & 4000.0 & 0.133 & 1.000 & 0.997 & 0.040 \\
    \hline
    Weak     & 4 & 3740.3 & 0.213 & 0.703 & 0.552 & 0.051 \\
             & 5 & 3949.6 & 0.202 & 0.882 & 0.559 & 0.053 \\
             & 8 & 4000.0 & 0.194 & 1.000 & 0.572 & 0.059 \\
    \hline
  \end{tabular}
  \caption{MLE accuracy and score size at \(T=4{,}000\). There are 2,000 MLE
  replications and 2,000 independent score replications per cell. Regular
  share is the proportion of MLE replications with an interior estimate and
  invertible plug-in information. The score statistic uses exact null
  information.}
  \label{tab:simulation-mle-score}
\end{table}

All MLE optimizations converge. In the weak-dependence design, estimates often
lie near independence, making the plug-in information numerically
singular.

For the interior simple null hypotheses considered here, the score statistic in
\eqref{eq:prop6-score} is available in every replication. Across the 27
combinations of design, \(T\), and \(\bar n\), rejection frequencies for the
\(N_T\)-scaled statistic range from \(0.0400\) to \(0.0595\), remaining close
to the nominal \(5\%\) level. These
frequencies support the realized-\(N_T\) normalization in
\eqref{eq:prop6-retained} and \eqref{eq:prop6-score}.

To examine the effect of normalization, let
\(\bar u_T=U_T(\theta_0)/N_T\) denote the average retained score. We compare
the statistic in \eqref{eq:prop6-score}:
\[
  Q_{N_T}
  =
  N_T\bar u_T'\mathcal I_{\bar n}(\theta_0)^{-1}\bar u_T
  =LM_T(\bar n;\theta_0),
\]
with its calendar-time counterpart:
\[
  Q_T^{\mathrm{cal}}
  =
  T\bar u_T'\mathcal I_{\bar n}(\theta_0)^{-1}\bar u_T
  =
  \frac{T}{N_T}Q_{N_T},
\]
which replaces the retained sample size by the calendar sample size.
Table~\ref{tab:simulation-score-scaling} uses the same score and exact null
information in both columns.

\begin{table}[htbp]
  \centering
  \small
  \begin{tabular}{crrr}
    \hline
    \(\bar n\) & Mean \(N_T\) & Size of \(Q_{N_T}\) & Size of \(Q_T^{\mathrm{cal}}\) \\
    \hline
    4 & 1975.2 & 0.044 & 0.267 \\
    5 & 3037.5 & 0.045 & 0.107 \\
    8 & 4000.0 & 0.040 & 0.040 \\
    \hline
  \end{tabular}
  \caption{Effective-sample-size normalization in the dense design at
  \(T=4{,}000\). Each row uses 2,000 independent replications and the \(5\%\)
  \(\chi^2_3\) critical value; all replications enter.}
  \label{tab:simulation-score-scaling}
\end{table}

The plug-in MLE Wald results display the same normalization effect. In the
dense design, rejection frequencies under the \(N_T\) and \(T\) multipliers
are \(0.066\) and \(0.265\), respectively, at \(\bar n=4\), and \(0.088\) and
\(0.142\) at \(\bar n=5\). These frequencies are conditional on an interior
estimate and invertible plug-in information, whereas the score frequencies in
Table~\ref{tab:simulation-score-scaling} are unconditional.

Exact enumeration of the \(2^8\) subsets confirms the covariance ordering
following \eqref{eq:prop6-collapsed}. In all three designs, both
\(V_4^{(T)}-V_5^{(T)}\) and \(V_5^{(T)}-V_8^{(T)}\) are positive definite;
their smallest eigenvalues range from \(0.00516\) to \(0.35012\). Thus
increasing \(\bar n\) reduces asymptotic covariance in every parameter
direction, accounting jointly for retention and information per retained
observation.

\subsection{Experiment 3: The choice of learning rate and the initialization}
\label{sec:simulation-asgd}

In the ASGD algorithm, the score is multiplied by the learning rate
\(\gamma t^{-0.75}\), and \(t\) advances with every calendar observation,
including observations that are not retained. The parameter estimate is updated only when
the observation is retained, and averaging begins with the first calendar
iterate. We calibrate \(\gamma\) separately for each design and truncation
level using 100 independent pilot paths of length \(T=1{,}000\) before
comparing ASGD with MLE. Because the
data-generating parameter is known, we select the \(\gamma\) candidate with the smallest
Fisher-weighted squared error among those for which at most \(5\%\) of paths
reach the parameter bounds. The selected learning rate is held fixed at all
reported simulation horizons. In the
2,000-replication main experiment at \(T=4{,}000\), the ASGD-to-MLE
\(\rho\)-RMSE ratio ranges from \(2.23\) to \(10.12\).

To assess the persistence of initialization effects, we extend the
recursion to:
\[
  T\in\{4{,}000,16{,}000,64{,}000,256{,}000\},
  \qquad
  \bar n\in\{4,5,8\},
\]
using 200 paired replications. Table~\ref{tab:simulation-asgd-long} reports
the largest-horizon results for the baseline and dense designs. Despite the
large number of updates, ASGD remains substantially less accurate than MLE,
including without truncation. Its RMSE declines more slowly over the reported
horizons under the fixed learning-rate schedule. A substantial accuracy gap
also remains in the weak-dependence design.\footnote{At \(T=256{,}000\),
ASGD \(\rho\)-RMSE is \(5.73\)--\(6.56\) times MLE RMSE in the
weak-dependence design.}

\begin{table}[htbp]
  \centering
  \small
  \begin{tabular}{lcrrrr}
    \hline
    Design & \(\bar n\) & Mean \(N_T\)
      & MLE RMSE & ASGD RMSE & ASGD/MLE \\
    \hline
    Baseline & 4 & 238717 & 0.0221 & 0.2152 &  9.75 \\
             & 5 & 252499 & 0.0200 & 0.2222 & 11.10 \\
             & 8 & 256000 & 0.0186 & 0.2712 & 14.60 \\
    \hline
    Dense    & 4 & 126386 & 0.0231 & 0.4725 & 20.42 \\
             & 5 & 194360 & 0.0185 & 0.2546 & 13.75 \\
             & 8 & 256000 & 0.0158 & 0.5522 & 34.87 \\
    \hline
  \end{tabular}
  \caption{Calendar-time-indexed ASGD versus paired MLE at
  \(T=256{,}000\),
  based on 200 paired replications. RMSE refers to \(\widehat\rho\).}
  \label{tab:simulation-asgd-long}
\end{table}

A local calculation relates the persistence of initialization effects to
the curvature. Let \(e_t=\hat\theta_t-\theta_0\) denote the raw-iterate
error and let
\(\mathcal H=\pi_{\bar n}(\theta_0)
\mathcal I_{\bar n}(\theta_0)\) be calendar-time information. Linearizing
the mean score gives:
\begin{equation}
  e_t
  \simeq
  \{I_3-\gamma t^{-c}\mathcal H\}e_{t-1}
  +\gamma t^{-c}\xi_t,
  \label{eq:simulation-local-asgd}
\end{equation}
where \(\xi_t\) is the centered score innovation. Diagonalizing the
positive-definite matrix \(\mathcal H\) separates fast and slow local
directions. Its spectral condition number is:
\[
  \kappa(\mathcal H)
  =\frac{\lambda_{\max}(\mathcal H)}{\lambda_{\min}(\mathcal H)}.
\]
In the simulation's \((\alpha_0,\alpha_1,\eta)\) coordinates,
\(\kappa(\mathcal H)\) is
approximately \(377\)--\(470\) in the regular baseline and dense
\(\bar n\in\{4,5\}\) cells. With the selected learning rate and \(c=0.75\), the
linearized persistence factor along the slowest direction:
\[
  \exp\left\{-\gamma\lambda_{\min}(\mathcal H)
  \frac{T^{1-c}-1}{1-c}\right\},
\]
is \(0.63\)--\(0.81\) at \(T=256{,}000\). In the weak-dependence cells,
\(\kappa(\mathcal H)\) is about \(79{,}000\)--\(89{,}000\), and the factor is about
\(0.998\). This coordinate-dependent factor measures the local contraction of the
initialization component along the slowest direction. Total error for the
averaged estimator also reflects accumulated score innovations and averaging.

The sensitivity with respect to the learning rate supports this interpretation.
We take \(c=0.75\), the
midpoint of the theoretically admissible interval \((1/2,1)\), and examine
\(c=0.60\) as a slower-decay alternative. Holding \(\gamma\) fixed, this change reduces
the \(\rho\)-RMSE by \(17.5\%\)--\(74.0\%\) across the nine cells at
\(T=256{,}000\), with RMSE remaining \(3.55\)--\(12.04\) times the MLE value.
Changing \(\gamma\) affects both the deterministic contraction and score
noise. In the separate 500-replication \(T=4{,}000\) sensitivity experiment,
replacing the selected \(\gamma\) by \(\gamma/4\) increases the \(\rho\)-RMSE in all
nine cells. Replacing the selected \(\gamma\) by \(4\gamma\) improves the
\(\rho\) component in the dense and
weak-dependence designs, but worsens it in the baseline design, and
\(33.2\%\)--\(95.4\%\) of
paths activate the projection. Larger cumulative learning rates therefore reduce
initialization bias, but increase projection activity and score noise. No
learning rate dominates across the three designs.

\section{Applications to cyber-event data}
\label{sec:application}

This section considers two cyber-event datasets with different scales
and sampling frequencies. The first records weekly combinations of breach types
and economic sectors over 772 weeks, providing a relatively small, low-frequency
application for model comparison and likelihood-based estimation. The second
records authentication activity every second for ten computers over 58 days,
yielding more than five million observations and illustrating one-pass ASGD in a
high-frequency data stream.
See Online Appendix~\ref{sec:simulation-contextual-asgd} for Monte Carlo
evidence on estimation with large $T$.

\subsection{Privacy Rights Clearinghouse sector--breach data}
\label{sec:cyber-eling-jung}
{Cyber losses differ by sector and breach type, and an insurer
writing cyber policies diversifies its portfolio across sectors. Whether this
diversification reduces risk exposure depends on whether breaches co-occur
across sectors and breach types in the same week. We study this
contemporaneous co-occurrence} using the Privacy Rights
Clearinghouse (PRC) database, which is widely used in the cyber-risk literature
\citep{eling2018copula,lu2024cyber}. We aggregate the records into $T=772$
consecutive weeks from 10 January 2005 through 27 October 2019. Following
the cross-sectional classification of \citet{eling2018copula}, we represent
weekly breaches by $X_t=(X_{t,gb})\in\{0,1\}^{5\times4}$. Rows index five
sectors---financial and insurance services (BSF), retail and other business
($\mathrm{BUS}=\mathrm{BSR}\cup\mathrm{BSO}$), medical services (MED),
education (EDU), and government and nonprofit organizations
($\mathrm{PUBLIC}=\mathrm{GOV}\cup\mathrm{NGO}$)---and columns index four
breach types: hacking (HACK), electronic-device loss
($\mathrm{ELET}=\mathrm{PHYS}\cup\mathrm{PORT}\cup\mathrm{STAT}$), unintended
disclosure (DISC), and insider breach (INSD). We set $X_{t,gb}=1$ if at least
one breach of type $b$ occurs in sector $g$ during week $t$, and zero otherwise.
Vectorizing $X_t$ gives the $n=20$ binary outcomes used in the analysis.
The corresponding random set is $S_t=\{(g,b):X_{t,gb}=1\}$.
For example, the first observed set corresponds to the week beginning January~10,~2005:
\begin{equation*}
X_1=
\begin{array}{c|rrrr}
 & \mathrm{HACK} & \mathrm{ELET} & \mathrm{DISC} & \mathrm{INSD} \\
\hline
\mathrm{BSF} & 0 & 0 & 0 & 0 \\
\mathrm{BUS} & 0 & 0 & 0 & 0 \\
\mathrm{MED} & 0 & 0 & 0 & 0 \\
\mathrm{EDU} & 1 & 0 & 0 & 0 \\
\mathrm{PUBLIC} & 0 & 0 & 0 & 0 \\
\end{array},
\qquad S_1=\{(\mathrm{EDU},\mathrm{HACK})\}.
\end{equation*}
This week contains an education-sector hacking incident and no incident in the other 19 sector--type cells. Over the $T$ weeks, observed cardinalities range from 0 to 14, with an average of 5.418. All 190 pairs of cells co-occur at least once.
{The sequence $(X_t)_{t=1}^T$ is a matrix-valued binary time series.
Cells in the same row share a sector and cells in the same column share a
breach type, so dependence is organized by row and column membership.}

\subsubsection{Model specification}
\label{sec:cyber-model-specification}
Previous studies report time trends in these data \citep{lu2024cyber}. We
therefore use a time-varying version of the factorization in
\eqref{qualitydiversity}:
\begin{equation}
		\label{dt}
L_t=D_t(I_{20}+R)D_t
=\Lambda_t+D_tRD_t,
\end{equation}
where the diagonal matrices are:
\[
D_t=\operatorname{diag}
\bigl(\sqrt{\lambda_{1t}},\ldots,\sqrt{\lambda_{20,t}}\bigr),
\qquad
\Lambda_t=D_t^2,
\]
and $R'=-R$. We allow $\Lambda_t$ to vary over time, but hold $R$ fixed.
Proposition~\ref{prop:marginal-parameterization} shows that, for fixed $R$,
an unrestricted positive diagonal matrix can attain any vector of marginal inclusion
probabilities. We parameterize its time path by:
\[
\lambda_{i,t}=\exp(z_t'\beta_i), \qquad i=(g,b),
\]
where
\(z_t=(1,b_1(\tau_t),b_2(\tau_t),b_3(\tau_t))'\in\mathbb R^4\)
collects the basis functions of a cubic spline in normalized time
\(\tau_t=(t-1)/(T-1)\), and $\beta_i$ is a four-dimensional vector of
coefficients for cell $i=(g,b)$.

An unrestricted $R$ has $20(20-1)/2=190$ free entries. To reduce
dimensionality while preserving sector and breach-type structure, we consider
four restrictions:
\begin{align}
		\label{benchmark}
		R&=S\otimes I_4+I_5\otimes B \qquad \text{(axis model)},\\
		R&=S\otimes I_4+C_S\otimes B \qquad \text{(breach-led model)},\label{eq:cyber-final-selected-model}
		\\
			R&=S\otimes C_B+I_5\otimes B  \qquad \text{(sector-led model)},	\label{symmetric2}
		\\
				R&=S\otimes C_B+C_S\otimes B \qquad \text{(two-way model)}.		\label{eq:cyber-two-way}
\end{align}
{Here $\otimes$ denotes the Kronecker product, $S$ ($5\times5$) and $B$
($4\times4$) are skew-symmetric sector and breach-type matrices, and $C_S$
and $C_B$ are symmetric loading matrices with unit
diagonals.\footnote{One diagonal restriction would suffice to fix
the scale between $C_S$ and $B$ (or between $S$ and $C_B$); fixing the whole
diagonal is a substantive homogeneity restriction. Neither matrix needs to be
a correlation matrix.}
For cells $i=(g,b)$ and $j=(h,c)$, the axis model implies:
\begin{equation}
r_{(g,b),(h,c)}
=S_{gh}\mathbf 1\{b=c\}+\mathbf 1\{g=h\}B_{bc}.
\label{eq:cyber-type-compat-entry}
\end{equation}
Thus $B$ is the breach-type interaction common to all sectors, and $S$ is the
sector interaction common to all breach types. Two cells that share neither a
sector nor a breach type have $r_{ij}=0$, hence an odds ratio of one
conditional on all other cells being zero. The other three models add this
cross channel. Writing $H_S=C_S-I_5$, the breach-led model
\eqref{eq:cyber-final-selected-model} is:
\begin{equation}
	R=S\otimes I_4+I_5\otimes B+H_S\otimes B,
	\label{eq:cyber-retained-decomposition}
\end{equation}
so that, for two distinct cells:
\[
r_{(g,b),(h,c)}
=
\begin{cases}
	S_{gh}, & \text{ if } g\ne h,\ b=c,\\
	B_{bc}, & \text{ if } g=h,\ b\ne c,\\
	(H_S)_{gh}B_{bc}, & \text{ if } g\ne h,\ b\ne c.
\end{cases}
\]
The cross effect is the breach-pair pattern $B_{bc}$ scaled by a sector-pair
loading $(H_S)_{gh}$; the unit diagonal of $C_S$ keeps $B$ as the common
within-sector block. The sector-led model \eqref{symmetric2} is the mirror
image, with cross effect $S_{gh}(H_B)_{bc}$, where $H_B=C_B-I_4$, and the
two-way model \eqref{eq:cyber-two-way} contains both. The breach-led model
has 80 cubic spline coefficients and 26 dependence parameters: ten in $S$,
six in $B$, and ten off-diagonal entries in $C_S$.\par}

\subsubsection{Estimation and findings}
We estimate the four NDPP candidates \eqref{benchmark}--\eqref{eq:cyber-two-way},
the model with time-adjusted independence ($R=0$), and the symmetric-DPP
benchmark in the spirit of \citet{mariet2016kronecker} (see Online
Appendix~\ref{app:cyber-matrix-literature}). We use the weekly matrices,
maximize each exact full-sample likelihood, and compare
the models using the Akaike Information Criterion (AIC), Bayesian Information
Criterion (BIC), and a blocked
predictive log score, defined as the average held-out log probability assigned
to the weekly observed set. %

\begin{table}[!htbp]
\centering
\begin{singlespace}
\scriptsize
\setlength{\tabcolsep}{3.5pt}
\renewcommand{\arraystretch}{1.05}
\begin{tabular}{lrrrrr}
\hline
Specification & Dependence parameters & Log likelihood & $\Delta$AIC & $\Delta$BIC
& Maximum absolute score\\
\hline
Time-adjusted independence & 0  & -6710.677 & 0.0    & 0.0  & --\\
Symmetric Kronecker DPP     & 16 & -6710.677 & 32.0   & 106.4 & $1.22{\times}10^{-6}$\\
Axis                       & 16 & -6660.681 & -68.0  & 6.4  & $5.82{\times}10^{-6}$\\
Sector-led                 & 22 & -6637.049 & -103.3 & -1.0 & $8.81{\times}10^{-6}$\\
Breach-led                 & 26 & -6619.405 & -130.5 & -9.7 & $5.39{\times}10^{-6}$\\
Two-way                    & 32 & -6609.976 & -137.4 & 11.4 & $9.89{\times}10^{-6}$\\
\hline
\end{tabular}
\caption{Fit of the symmetric and nonsymmetric DPP specifications.
Information-criterion differences are relative to time-adjusted independence.}
\label{tab:cyber-retained-fit}
\end{singlespace}
\end{table}

The symmetric fit uses the independence start and 11 deterministic nonzero
starts. Every run converges to
$C_G^{\mathrm{sym}}=I_5,C_B^{\mathrm{sym}}=I_4$, that is, to time-adjusted
independence. The NDPP axis model improves the fit without adding parameters.
Its advantage comes from relaxing the restriction to nonpositive pairwise
dependence rather than increasing the model complexity.

Five-fold blocked cross-validation also places the symmetric DPP at the
independence boundary. The NDPP axis model improves the pooled held-out log
score by $0.0205$ per week, with positive gains in three of the five blocks.

{\emergencystretch=1em
The breach-led specification has lower AIC and BIC values than both the
time-adjusted independence model and the axis model, supporting the inclusion of cross-axis
dependence. AIC selects the two-way model, whereas BIC prefers the more parsimonious breach-led
model in \eqref{eq:cyber-final-selected-model}. We retain the latter one as the
primary specification.\par}

{\emergencystretch=2em
{Sign similarity and transposition preserve every principal minor
(Proposition~\ref{lda}) and hence leave the likelihood unchanged. Within the
breach-led class, the sign changes that preserve the Kronecker structure are
of the form $E_S\otimes E_B$, where $E_S$ and $E_B$ are $5\times5$ and
$4\times4$ diagonal sign matrices. They map $(S,B,C_S)$ to
$(E_SSE_S,E_BBE_B,E_SC_SE_S)$, whereas transposition maps $(S,B)$ to
$(-S,-B)$. Because the first-row normalization of Proposition~\ref{lda} need
not preserve this structure, we select, among these equivalent solutions, the
lexicographically largest coefficient vector, listing the upper-triangular
entries of $S$, $B$, and $C_S$ in row order.}
Table~\ref{tab:cyber-selfscaled-parameters} in
Online Appendix~\ref{app:cyber-selfscaled-parameters} reports the resulting estimates.
The signs select a reproducible solution, but are not themselves
identified. The magnitudes and fitted probabilities are invariant to that
choice.\par}

Table~\ref{tab:cyber-marginal-correlations} reports the five largest positive
fitted correlations after pooling the weeks with equal weights. Pooling
includes both average within-week covariance and covariance between the weekly
marginal probabilities. Specifically, let:
\[
p_{it}=(K_t)_{ii},\qquad
c_{ij,t}=-(K_t)_{ij}(K_t)_{ji},\qquad
\bar p_i=\frac1T\sum_{t=1}^T p_{it}.
\]
The pooled covariance is
\[
\operatorname{Cov}_{\mathrm{pooled}}(X_i,X_j)
=\frac1T\sum_{t=1}^T c_{ij,t}
+\frac1T\sum_{t=1}^T(p_{it}-\bar p_i)(p_{jt}-\bar p_j).
\]
Dividing by $\sqrt{\bar p_i(1-\bar p_i)\bar p_j(1-\bar p_j)}$ gives the
pooled correlation. Two of the five pairs differ in both sector and breach type.
\begin{table}[htbp]
	\centering
	\begin{tabular}{clcc}
		\hline
		Rank & Pair & Correlation & Pair type \\
		\hline
		1 & MED--HACK and MED--DISC       & 0.44799 & Breach type \\
		2 & BUS--HACK and MED--DISC       & 0.29953 & Cross       \\
		3 & BUS--HACK and MED--HACK       & 0.29064 & Sector      \\
		4 & BUS--HACK and MED--ELET       & 0.29055 & Cross       \\
		5 & EDU--ELET and PUBLIC--ELET    & 0.28169 & Sector      \\
		\hline
	\end{tabular}
	\caption{Five largest fitted correlations after pooling the 772 weeks.}
	\label{tab:cyber-marginal-correlations}
\end{table}

To isolate contemporaneous NDPP dependence, we instead compute correlations
conditional on each week:
\[
\rho_{ij,t}
=
\frac{c_{ij,t}}
{\sqrt{p_{it}(1-p_{it})p_{jt}(1-p_{jt})}},
\]
Their average $\bar\rho_{ij}=T^{-1}\sum_{t=1}^T\rho_{ij,t}$ measures the mean of
within-week correlation, excluding co-movements between the weekly marginal
probabilities.
Table~\ref{tab:cyber-pair-class-dependence} summarizes these correlations by
the three classes of pairs induced by \eqref{eq:cyber-retained-decomposition}.
\begin{table}[!htbp]
\centering
\begin{singlespace}
\scriptsize
\setlength{\tabcolsep}{3pt}
\renewcommand{\arraystretch}{1.08}
\begin{tabular}{@{}lrrrl@{}}
\hline
Pair class & Pairs & Mean $\bar\rho_{ij}$ & Mean $|\rho_{ij,t}|$
& Largest $\bar\rho_{ij}$\\
\hline
Same sector, different breach types & 30  & 0.0151 & 0.0158
& MED--DISC / MED--INSD (0.0354)\\
Different sectors, same breach type & 40  & 0.0285 & 0.0300
& BUS--ELET / MED--ELET (0.0733)\\
Different sectors, different breach types & 120 & 0.0114 & 0.0136
& BSF--INSD / EDU--HACK (0.0555)\\
\hline
\end{tabular}
\caption{Contemporaneous dependence under the selected breach-led model.
The mean absolute correlation averages $|\rho_{ij,t}|$ over the relevant
pairs and weeks.}
\label{tab:cyber-pair-class-dependence}
\end{singlespace}
\end{table}

Pairs from different sectors and the same breach type have the strongest
average dependence. The class comprising different sectors and different breach
types is weaker on average, but contains some comparably large associations.
Overall, 169 of the 190 fitted
$\bar\rho_{ij}$ are positive, highlighting the positive co-occurrence
allowed by the nonsymmetric DPP; a symmetric DPP would instead impose
nonpositive pairwise covariances. Moving from the re-estimated axis
specification to the selected breach-led specification raises the mean
absolute correlation in the class comprising different sectors and different
breach types from 0.0024 to 0.0136.

To measure each pair class's contribution to fluctuations in weekly
cardinality, we sum its conditional covariances. Each contribution therefore
reflects both the number of pairs and the magnitude of their covariances.
For the weekly cardinality $|S_t|$, the conditional variance admits the decomposition:
\[
\operatorname{Var}(|S_t|\mid z_t)
=
\sum_i p_{it}(1-p_{it})
+2\sum_{i<j}c_{ij,t}.
\]
The first term is the contribution of independent Bernoulli variables with the
same marginals. We split the second term among three pair classes: (i) same sector,
different breach types; (ii) different sectors, same breach type; and (iii)
different sectors, different breach types. Averaging over 772 weeks yields:
\[
\underbrace{3.74473}_{\text{selected NDPP}}
=
\underbrace{2.80760}_{\substack{\text{independence with}\\\text{the same marginals}}}
+\underbrace{0.13646}_{\substack{\text{same sector,}\\\text{different breach types}}}
+\underbrace{0.37287}_{\substack{\text{different sectors,}\\\text{same breach type}}}
+\underbrace{0.42780}_{\substack{\text{different sectors,}\\\text{different breach types}}}.
\]
Contemporaneous dependence raises the fitted average conditional cardinality
variance by $33.4\%$ relative to independence with the same marginals.
Different-sector, different-breach pairs make the largest aggregate
contribution despite their weaker average pairwise correlations, illustrating
why modest dependence across many pairs can matter in aggregate. This
decomposition attributes variance to cell-pair classes, not to individual
terms of $R$, because $K_t$ depends on the complete kernel.

\subsubsection{Alternative estimation procedures}
Let us now assess the selected breach-led specification under two alternative
estimation procedures: cardinality-truncated likelihood and stochastic-gradient
ascent.

\paragraph{Truncated likelihood.}
We reestimate the selected breach-led specification using $\bar n=8$, which
retains 86.7\% of the weeks. The fitted marginal probabilities remain close to
those from the full MLE. After applying the same normalization, the dependence
coefficients are also close and retain the same signs. On the common
untruncated criterion, the full MLE gains 22.910
log-likelihood units. In the predictive comparison using five contiguous blocks, it
gains 0.441 conditional log-score units per held-out retained week.

\paragraph{One-pass ASGD estimator.}
We implement the observation-level recursion in
Section~\ref{sec:inference}. We set $\bar n=20$, so ASGD and full MLE use
the same untruncated criterion. ASGD processes the 772 weekly pairs $(S_t,z_t)$
exactly once in chronological order using:
\[
\gamma_t=\gamma_0t^{-0.60},\qquad t=1,\ldots,772.
\]
Every post-update iterate enters the Polyak--Ruppert average. We consider two
starting values: the full-sample axis-model estimate embedded in the selected
model (axis warm start), and the full-sample cell-specific cubic spline estimates with
the upper entries of $S$ and $B$ initialized at $0.10$ (cellwise start).
{The skew entries start away from zero because the dependence
score vanishes at $R=0$, so an ASGD recursion started there never leaves
independence (see the discussion after
Proposition~\ref{prop:consistency}).}
Both
starts use the complete sample, so the exercise evaluates one ASGD pass
following initialization using the complete sample.

We use $\gamma_0=5$ as the baseline and also report the sensitivity to
$\gamma_0\in\{0.03,0.10,0.30,1,3,5,10\}$. Writing
$\hat\theta_T^{\mathrm{MLE}}$ for the full MLE and $\bar\theta_T$ for the
ASGD average, define the likelihood shortfall as:
\[
\Delta_\ell
=
\ell_T(\hat\theta_T^{\mathrm{MLE}})
-
\ell_T(\bar\theta_T).
\]

\begin{table}[!htbp]
\centering
\begin{singlespace}
\scriptsize
\setlength{\tabcolsep}{4pt}
\renewcommand{\arraystretch}{1.05}
\begin{tabular}{lrrrrr}
\hline
Initial value & Initial $\ell_T$ & Raw endpoint $\ell_T$
& Averaged $\ell_T$ & $\Delta_\ell$ & $\|\nabla\ell_T\|_\infty/T$\\
\hline
Axis warm start & -6660.681 & -6919.064 & -6748.920 & 129.515 & 0.0777\\
Cellwise start  & -6708.017 & -6931.639 & -6770.810 & 151.405 & 0.0559\\
\hline
\end{tabular}
\caption{Chronological one-pass ASGD results for $\gamma_0=5$. The last two
columns refer to the Polyak--Ruppert average; the deterministic full MLE has
$\ell_T=-6619.405$.}
\label{tab:cyber-one-pass-asgd}
\end{singlespace}
\end{table}

Table~\ref{tab:cyber-one-pass-asgd} shows substantial finite-sample
optimization error. For both starting values, averaging produces a higher
likelihood than the raw endpoint, but a lower likelihood than the starting
estimate. Among the seven values of the learning-rate constant $\gamma_0$
considered, the highest averaged log-likelihoods are $-6666.016$ for the
axis warm start and $-6713.656$ for the cellwise start. Both are attained
at $\gamma_0=0.03$, and neither exceeds the corresponding starting
likelihood. One pass
requires 772 component-score evaluations, equivalent to one full-data pass;
the update loops took 0.16 and 0.19 seconds, respectively, excluding
full-criterion diagnostics. At $T=772$, one-pass ASGD is fast, but produces
substantially lower likelihoods than full-sample likelihood maximization. This weekly exercise
evaluates finite-sample optimization on a
time-ordered path.

\newcommand{\LANLASGDReference}{Proposition~\ref{prop:asnorm}}

\subsection{LANL enterprise authentication data}
\label{sec:lanl-asgd-application}

{\emergencystretch=1em
The second application uses the authentication component of the Los Alamos National Laboratory
(LANL) Comprehensive, Multi-Source Cyber-Security Events database
\citep{kent2015comprehensive}, whose length and heterogeneity make one-pass
ASGD attractive. The authentication dataset contains 1,051,430,459
records involving 16,230 distinct source computers and 15,895 distinct
destination computers over 58 consecutive days. LANL reports elapsed seconds
from an undisclosed epoch, so we index the sample by data day rather than
calendar date. Of these records, 44.53\% have orientation
\texttt{LogOn}. Among the LogOn records, 78.94\% involve different source
and destination computers. We focus on these cross-computer LogOn events
because they involve the use of credentials to access another machine.
For example, one LogOn record is:\par}
\begin{singlespace}
\par\noindent\makebox[\textwidth][c]{%
\small\texttt{4,C553\$@DOM1,C553\$@DOM1,C553,C523,Kerberos,Network,LogOn,Success}.}
\par
\end{singlespace}
This record says that at second 4, source computer \texttt{C553} successfully
authenticated to destination computer \texttt{C523} through a network LogOn.
{For a set $\mathcal C$ of source computers, the observation at
second $t$ is the random set:
\begingroup
\setlength{\abovedisplayskip}{0pt}
\setlength{\abovedisplayshortskip}{0pt}
\setlength{\belowdisplayskip}{0pt}
\setlength{\belowdisplayshortskip}{0pt}
\begin{equation}
 \begin{aligned}
 S_t=\{i\in\mathcal C:\ &i\text{ initiates at least one cross-computer LogOn}\\
 &\text{authentication attempt during second }t\}.
 \end{aligned}
 \label{eq:lanl-random-set}
\end{equation}
\endgroup}%
To estimate a dense dependence model for
an informative subset of the 16,230 sources, many of which have sparse binary
activity series, we construct a ten-computer panel in two stages using
Days~1--8. On Day~1, we first retain the 20 most active sources, which
initiate 16.45\% of that day's cross-computer LogOn records.
Then we retain the ten candidates with the strongest aggregate positive same-second
residual dependence after accounting for their individual activity
{\interfootnotelinepenalty=10000
dynamics.\footnote{To select these ten source computers,
for each of the 20 source computers selected on Day~1, indexed by $i$, we fit, on the calibration seconds
$\mathcal T_c$ in Days~2--8, a marginal model corresponding to the NDPP
specification in \eqref{eq:lanl-dynamic-ndpp} below with $R=0$. Estimation uses
ridge-penalized maximum likelihood \citep[see][]{lecessie1992ridge}, with an
unpenalized intercept and penalty coefficient $10^{-7}$ on the remaining
coefficients. Let $\widehat p_{it}$ denote the
fitted conditional inclusion probability and form
$e_{it}=X_{it}-\widehat p_{it}$, observed activity minus its fitted probability.
Define:
\[
 \varrho_{ij}=\frac{\sum_{t\in\mathcal T_c}e_{it}e_{jt}}
 {\{\sum_{t\in\mathcal T_c}e_{it}^2
     \sum_{t\in\mathcal T_c}e_{jt}^2\}^{1/2}},
 \qquad
 q_i=\sum_{j\ne i}\max\{\varrho_{ij},0\}.
\]
The ten largest $q_i$ determine the panel. The quantity $\varrho_{ij}$ measures the
remaining same-second co-activity of source computers $i$ and $j$ after accounting
for their fitted individual activity probabilities, and $q_i$ totals source computer $i$'s
positive links to the others.}} We denote the selected ten-computer set by
$\mathcal C$. Its members are \texttt{C105}, \texttt{C1968}, \texttt{C1618},
\texttt{C527}, \texttt{C574}, \texttt{C539}, \texttt{C1619}, \texttt{C1795},
\texttt{C1115}, and \texttt{C553}.

In the example above, $\texttt{C553}\in S_4$.
Across the 58 days, the ten selected sources initiate 23,306,801 cross-computer LogOn
records, or 6.31\% of all such records. Collapsing these records to second-level
indicators produces 5,011,200 ten-dimensional observations. All $2^{10}$ set configurations occur at least
once.

For the LANL data, we extend the time-varying specification to capture intraday
seasonality, weekly patterns, and lagged activity:
\begin{equation}
 L_t=D_t(I_{10}+R)D_t,
 \qquad (D_t)_{ii}=\exp(z_{it}'\beta_i/2),
 \qquad R'=-R.
 \label{eq:lanl-dynamic-ndpp}
\end{equation}
The 17 components of $z_{it}$ comprise 14 time-based predictors and three
lagged-activity predictors. The time-based predictors are an intercept; a sine
and cosine for each of the 24- and 12-hour cycles; six indicators for the seven
positions in the weekly cycle; and three polynomial trend terms for gradual
change over the 58-day sample. The remaining predictors are whether source
computer $i$ was active in the previous second, its fraction of active seconds
during the previous minute, and the previous second's log-transformed number
of active sources, $\log(1+|S_{t-1}|)$.

The NDPP has $10\times17=170$ activity coefficients and
$\binom{10}{2}=45$ dependence coordinates, for a total of 215 parameters.
Setting $R=0$
yields a dynamic benchmark in which the computers' activities are conditionally independent
within each second given the time and lag predictors.

We use Days~1--8 to select the computer panel and determine starting values
and coordinate scales. Days~16--22 are used to tune ASGD and initialize the
main fit. We then make one chronological pass through Days~23--51 and
evaluate the fitted models on Days~52--58 with coefficients held fixed.
Panel~A of Table~\ref{tab:lanl-asgd-results} summarizes this chronology.

For both models, each ASGD update uses the full one-period likelihood. Indeed,
since the panel contains only ten computers, evaluating this likelihood is
computationally inexpensive, so no cardinality truncation is used.

We apply ASGD in scaled coordinates $v$, where
$\theta=\theta_{\mathrm{init}}+C_{\mathrm{sc}}v$.
The starting value $\theta_{\mathrm{init}}$ and fixed positive diagonal scaling
matrix $C_{\mathrm{sc}}$ are constructed from Days~2--8 as described in
Online Appendix~\ref{app:lanl-asgd-implementation}.
Within each path, reindex the observations by $m=1,\ldots,J$, and let $m_0$
denote the number of preceding warm-up updates carried into the step-size
clock. After observing $S_m$, the procedure takes a gradient step in $v$ using
step size $\gamma(m_0+m)^{-c}$. It then moves to the next second without
returning to earlier data.

Let $\hat\theta_m$ be the post-update estimate after observation $m$. We
report the tail average:
\[
 \bar\theta_J^{(f)}
 =\frac{1}{J-b_f}\sum_{m=b_f+1}^{J}\hat\theta_m,
 \qquad
 b_f=\lfloor(1-f)J\rfloor,\quad 0<f\leq1.
\]
The choice $f=1$ averages all post-update estimates and corresponds to the
full-history rule analyzed in \LANLASGDReference; $f=0.50$ averages the second
half of the path. Each path processes every observation once, and $f$
determines which post-update estimates enter the reported average.

We fit candidate ASGD paths on Days~16--19 and select $(\gamma,c,f)$ by their fixed-parameter mean
joint log scores on Days~20--22.\footnote{In the simulation study, $\gamma$ is
selected by minimizing Fisher-weighted squared error relative to the known
data-generating parameter. In this application, the true parameter is unknown,
so tuning parameters are selected using joint log scores on a subsequent
validation period.} For each model, we run one candidate ASGD path for
every combination:
\[
 (\gamma,c,f)\in
 \{0.03,0.10,0.30,1,3,10,30,100,300\}
 \times\{0.55,0.60\}
 \times\{1,0.75,0.50,0.25\}.
\]
All candidate paths use the starting values and coordinate scales fixed on
Days~2--8. After selecting the best
combination separately for each model, we rerun that combination over
Days~16--22 and use its chosen tail average to initialize the Day~23 path.
Both models select $c=0.55$ and $\gamma=1$. The NDPP selects $f=0.50$, whereas
the independence model selects $f=0.75$. The reported averages restart on Day~23, but
the step-size counter does not: $m_0=604{,}800$, so the first main-period
update uses step-size index 604,801. The main one-pass fit spans Days~23--51.

On Days~52--58, all estimated coefficients are held fixed. We use the same
mean joint log score, now computed per second over the $604{,}800$ evaluation
seconds. Panel~B reports the computational performance and later-period
predictions of the two fitted models.

\begin{table}[!t]
\centering
\begin{singlespace}
\scriptsize
\setlength{\tabcolsep}{4.5pt}
\renewcommand{\arraystretch}{1.08}
\begin{tabular}{@{}lp{0.70\linewidth}@{}}
\hline
\multicolumn{2}{c}{Panel A: how the data days are used}\\
\hline
Data days & Use\\
\hline
1 & Select 20 high-activity candidate source computers\\
2--8 & Choose the ten-computer panel; set starting values and coordinate scales\\
9--15 & Not used for panel selection, tuning, or main estimation\\
16--19 & Run candidate ASGD settings\\
20--22 & Evaluate candidate estimates with coefficients held fixed\\
16--22 & Rerun the chosen setting to initialize the main fit\\
23--51 & One-pass ASGD fit, with one update per second\\
52--58 & Evaluate both fitted models with coefficients held fixed\\
\hline
\end{tabular}

\vspace{0.7em}

\begin{tabular}{@{}p{0.50\linewidth}rr@{}}
\hline
\multicolumn{3}{c}{Panel B: computation and later-period prediction}\\
\hline
Quantity & Dynamic NDPP & Independence\\
\hline
Number of parameters & 215 & 170\\
Updates during the main fit & 2,505,600 & 2,505,600\\
Time for the main fit (seconds) & 400.65 & 209.91\\
Updates per second & 6,254 & 11,936\\
One-step-ahead log score during the main fit, Days~23--51
  & $-5.70728$ & $-5.73938$\\
Later-period log score, all-main-path average ($f=1$)
  & $-5.61413$ & $-5.61887$\\
Later-period log score, common last-half average ($f=0.50$)
  & $-5.55451$ & $-5.57564$\\
Later-period log score, validation-selected average
  & $-5.55451$ & $-5.61090$\\
Predicted mean number of active sources, validation-selected average
  & 3.22364 & 3.32558\\
\hline
\end{tabular}
\caption{Chronological design, computation, and predictive performance. The
observed mean number of active sources on Days~52--58 is 2.88722. Over the 28
six-hour evaluation blocks, the NDPP has a higher log score than independence
in 18 blocks under full-history averaging and 20 under common last-half
averaging. Main-fit
runtimes were measured on an Apple M1 iMac with 16GB of memory and exclude
raw-data aggregation, calibration, ASGD-setting selection, and later-period
evaluation.}
\label{tab:lanl-asgd-results}
\end{singlespace}
\end{table}

{The main fit processes roughly 2.5 million updates in under seven
minutes, which is feasible at the data's one-second arrival frequency. On
Days~52--58, with the validation-selected averages, the dynamic NDPP assigns
the realized sets a geometric-mean probability 5.8\% higher than independence
does. It also reduces, but does not eliminate, the overprediction of mean
activity.\par}

The one-step-ahead scores use the estimate available before each observation,
whereas the later-period scores hold the corresponding parameter estimates
fixed on Days~52--58.
The NDPP's advantage over independence depends on averaging. With the
full-history average, the overall gain is small and positive on only three of
seven days. Common last-half averaging produces a larger gain and positive
daily differences on six of seven days.

Separately, Figure~\ref{fig:lanl-asgd-running-averages} compares full-history
and last-half averaging within the NDPP along the main estimation path. At
each daily checkpoint, both averages are held fixed and evaluated on the first
six hours of the following day. Last-half averaging improves prediction in
most forecast windows, but the gain changes sign across checkpoints, so the
improvement is not uniform over time.

\begin{figure}[!htbp]
\centering
\includegraphics[width=0.65\linewidth]{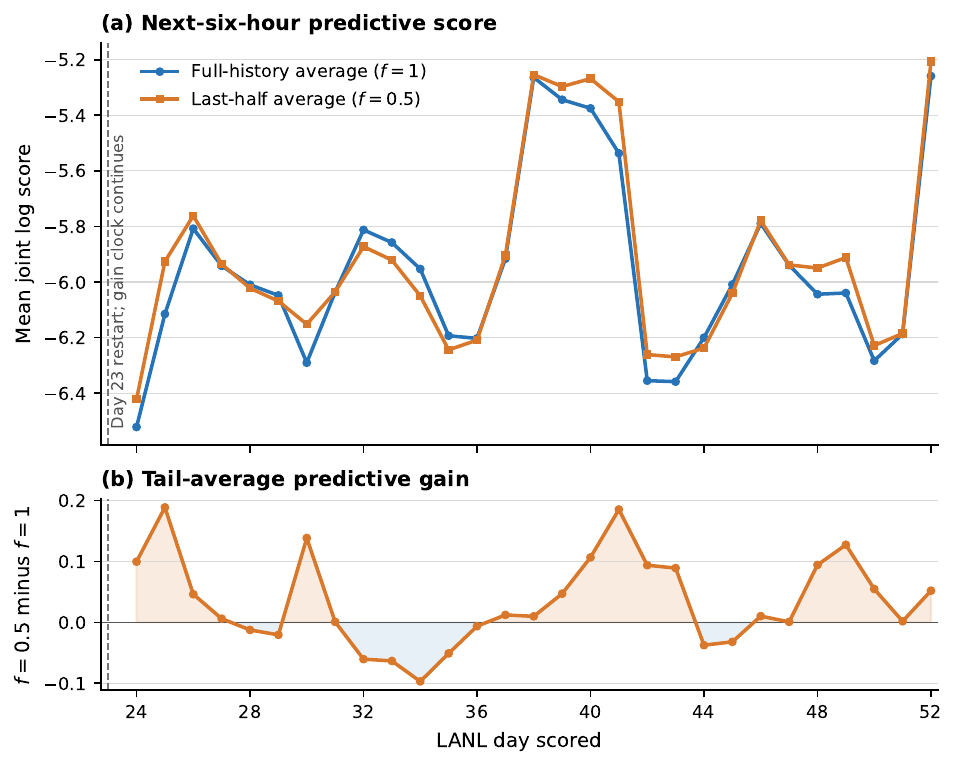}
\caption{Running full-history and last-half ASGD averages for the dynamic
NDPP. The averages restart on Day~23, while the step-size clock includes the
604,800 earlier updates. At the end of each main-fit day, the two averages are
frozen and scored on the first six hours of the next day. Panel~(a) reports
their mean joint log scores; Panel~(b) reports the difference, with positive
values favoring the last-half average. Last-half averaging has a higher log
score in 20 of the 29 forecast windows; its mean and median gains are $0.0341$
and $0.0103$ per second, respectively. The diagnostic does not separate
initialization effects from temporal change.}
\label{fig:lanl-asgd-running-averages}
\end{figure}

\clearpage
\section{Conclusion}
\label{sec:conclusion}

This paper develops a systematic statistical framework for identification,
estimation, and inference in nonsymmetric determinantal point processes. Its
central message is that the ability to
represent positive dependence raises nontrivial identification issues. In {the
class $\mathcal L_\Sigma$ of kernels with a positive semidefinite symmetric part},
observationally equivalent kernels
can form a continuum. With a diagonal symmetric part, only finitely many
likelihood kernels generate a given distribution.

Cardinality truncation identifies the principal minors retained. For statistical
inference, first-order precision depends on both the number of retained observations and their
conditional Fisher information. Under the conditions of Proposition~\ref{prop:asnorm},
truncated maximum likelihood and Polyak--Ruppert ASGD share the same
first-order asymptotic distribution, while the simulations reveal substantial differences
in their finite-sample performance over the reported horizons.
Further research could assess how starting values and learning rates affect
the coverage of ASGD confidence regions.

The applications illustrate this distinction. Full-sample
maximum likelihood remains preferable for the short weekly cyber-risk sample,
whereas one-pass ASGD becomes useful for the multi-million-observation LANL
stream, whose observation-specific kernels make repeated full-data likelihood
evaluation costly. In both settings, the NDPP captures positive
contemporaneous dependence that is absent from the symmetric or independent
DPP models. Nonregular inference at independence and theory for dependent data
streams remain important directions for further research.

\section*{Declaration of generative AI use}
The authors used OpenAI Codex to assist with language editing, manuscript
organization, the review of mathematical arguments, and writing computational
code. The authors reviewed and revised the AI-assisted material and take full
responsibility for the accuracy and integrity of the final manuscript and
accompanying code.

\setlength{\bibsep}{3pt}
\bibliographystyle{apalike_ndpp}
\bibliography{lib}

\clearpage
\appendix
\numberwithin{table}{section}
\section{Proofs}
 
\subsection{Proof of Proposition~\ref{lda}}
\label{app:proofs-est}
{The off-diagonal blocks of $L_0$ and $A_0$ coincide, and
skew-symmetry makes the two cross-block ranks equal. Off-diagonal fullness and
condition~\eqref{simplified} therefore state that $A_0$ is dense and
HL-indecomposable in the sense of \citet{boussairi2015skew}. Let
$\widetilde L=\widetilde\Lambda+\widetilde A\in\mathcal L_\Lambda$ have the
same principal minors as $L_0$ of order at most four. Equal singleton minors
give $\widetilde\Lambda=\Lambda_0$. For a diagonal matrix $\Lambda$ and every
subset $s$:
\[
\det(\Lambda+A)_s=\sum_{u\subseteq s}\det A_u\prod_{i\in s\setminus u}\lambda_i,
\qquad \det A_\varnothing=1,
\]
so, by induction on $|s|$, $\widetilde A$ and $A_0$ have the same principal
minors of order at most four. By \citet[Theorem~7]{boussairi2015skew},
$\widetilde A=DA_0D$ or $\widetilde A=DA_0'D$ for some sign matrix $D$, which
may be normalized to $d_1=1$ because $D$ and $-D$ act identically. Since
$D\Lambda_0D=\Lambda_0$ and $D=D^{-1}$, $\widetilde L=DL_0D^{-1}$ or
$\widetilde L=DL_0'D^{-1}$. Conversely, sign similarity and transposition
preserve membership in $\mathcal L_\Lambda$ and every principal minor, since
$(DL_0D^{-1})_s=D_s(L_0)_sD_s^{-1}$ and $\det((L_0')_s)=\det(L_0)_s$.\par}

It remains to establish the counts. If
$D L_0D^{-1}=E L_0E^{-1}$ for two normalized sign matrices $D$ and $E$, then
$F=E^{-1}D$ commutes with $L_0$. Writing
$F=\operatorname{diag}(f_1,\ldots,f_n)$, off-diagonal fullness gives
$(f_j-f_k)(L_0)_{jk}=0$, so all $f_j$ are equal. Since $f_1=1$, $F=I$ and
$D=E$. Thus each of the two sets of kernels contains $2^{n-1}$ distinct kernels.

The two sets of kernels are disjoint. Otherwise, for normalized sign matrices $D,E$,
we would have $F L_0F^{-1}=L_0'$ with $F=E^{-1}D$. Since $F^{-1}=F$ and
$(L_0)_{jk}\ne0$, comparison of off-diagonal entries gives
$f_jf_k=-1$ for every $j\ne k$. For three distinct indices $i,j,k$, the
product of the three corresponding equalities has left-hand side
$f_i^2f_j^2f_k^2=1$ and right-hand side $-1$, a contradiction.

Finally, in either case, choosing
$d_j=\operatorname{sign}(L_{1j})$ uniquely makes every first-row entry
positive. If $L$ is row-normalized, then
$J L'J$, with $J=\operatorname{diag}(1,-1,\ldots,-1)$, is the row-normalized
representative associated with $L'$. It has the same first row as $L$ and
the negative of every off-diagonal entry in the lower-right block. Since
$L_{23}\ne0$, imposing $L_{23}>0$ selects exactly one of these two
representatives. This proves the proposition.

\subsection{Proof of Proposition~\ref{prop:marginal-parameterization}}
\label{app:proof-marginal-parameterization}
{Fix $R$ and set $\alpha_i=\log\lambda_i$. Then
$p(s)\propto\det(I+R_s)\exp(\alpha'x_s)$, where $x_s\in\{0,1\}^n$ is the
inclusion vector of $s$. Each base weight $\det(I+R_s)$ is positive because
$I+R_s$ has symmetric part $I$. The family is therefore a regular exponential
family with natural parameter space $\mathbb R^n$. It is minimal, since an
affine relation $b'x_s=c$ evaluated at $s=\varnothing$ and at the singletons
gives $c=0$ and $b=0$, and its convex support is $[0,1]^n$. Its mean map
$\alpha\mapsto\E_\alpha X=\operatorname{diag}K$ is therefore a bijection onto
$(0,1)^n$ \citep[Proposition~3.2 and Theorem~3.3]{wainwright2008graphical},
and $\lambda_i=e^{\alpha_i}$ completes the proof.\par}

\subsection{Proof of Proposition~\ref{prop:consistency}}
\label{app:proof-consistency}

\paragraph{Part (i).}
{
Under correct specification, the population criterion is:
\[
  \ell_\infty(\bar n;\theta)
  = \sum_{s:\,|s|\le \bar n}
      \Bigl[\,p(s;\theta_0)\,\log p\bigl(s;\theta\mid S\in \cS_{\bar n}\bigr)\Bigr],
\]
whose maximizing set is:
\begin{align*}
  \bigl\{\theta:\;& p(s;\theta\mid S\in \cS_{\bar n})
         = p(s;\theta_0\mid S\in \cS_{\bar n})
         \text{ for every }s\in\cS_{\bar n}\bigr\}
  = \Theta_0(\bar n),
\end{align*}
by the Kullback--Leibler inequality. For every $\varepsilon>0$, compactness and
continuity give:
\[
  \sup_{\{\theta:\,d(\theta,\Theta_0(\bar n))\geq\varepsilon\}}
  \ell_\infty(\bar n;\theta)
  <
  \sup_{\theta\in\Theta}\ell_\infty(\bar n;\theta),
  \qquad
  \text{where }d(\theta,A)=\inf_{a\in A}\|\theta-a\|.
\]
It remains to verify the required uniform convergence. Write
$p_0(s)=p(s;\theta_0)$ and
$\cS_+=\{s\in\cS_{\bar n}:p_0(s)>0\}$. Almost surely,
$\hat p_T(s)=0$ for every $T$ and every $s\notin\cS_+$. Moreover,
$m=\min_{\theta\in\Theta}\ell_\infty(\bar n;\theta)>-\infty$.
Finiteness of $\ell_\infty$ implies $q(s;\theta)>0$ for every
$s\in\cS_+$ and $\theta\in\Theta$. Since $\log q(s;\theta)\leq0$, we have,
for such $s$ and $\theta$,
\[
  m
  \leq \ell_\infty(\bar n;\theta)
  \leq p_0(s)\log q(s;\theta)
  \leq0.
\]
Thus each $\log q(s;\theta)$ is uniformly bounded on $\Theta$ for
$s\in\cS_+$. Applying the strong law of large numbers to the finitely many
subset indicators gives:
\begin{align*}
  \sup_{\theta\in\Theta}
  \left|T^{-1}\ell_T(\bar n;\theta)-\ell_\infty(\bar n;\theta)\right|
  &\leq
  \sum_{s\in\cS_+}
  |\hat p_T(s)-p_0(s)|
  \sup_{\theta\in\Theta}|\log q(s;\theta)|
  \longrightarrow0
\end{align*}
almost surely. This uniform convergence implies that, for all sufficiently
large $T$, every empirical maximizer lies within $\varepsilon$ of
$\Theta_0(\bar n)$, proving outer consistency.
}

\paragraph{Part (ii).}
{
Since $\theta_0^\star\in\Theta_0(\bar n)$, the conditional laws on
$\cS_{\bar n}$ coincide, and hence
\[
  \nabla_\theta\ell_\infty(\bar n;\theta_0^\star)
  =\pi_{\bar n}(\theta_0)
    \sum_{s\in\cS_{\bar n}}q(s;\theta_0^\star)
      \nabla_\theta\log q(s;\theta_0^\star)
  =\pi_{\bar n}(\theta_0)
    \left.\nabla_\theta\sum_{s\in\cS_{\bar n}}q(s;\theta)
    \right|_{\theta=\theta_0^\star}
  =0.
\]
{\emergencystretch=2em
Here differentiation is performed before evaluation at $\theta_0^\star$;
the derivative vanishes because the conditional probabilities sum to one.\par}

Fix $0<\kappa<\lambda_{\min}(\mathcal H)$. Writing
$e=\theta-\theta_0^\star$ and using
$\nabla_\theta^2
\ell_\infty(\bar n;\theta_0^\star)=-\mathcal H$ gives:
\[
  e'\nabla_\theta
  \ell_\infty(\bar n;\theta_0^\star+e)
  =-e'\mathcal H e+o(\|e\|^2)
  \leq-\kappa\|e\|^2,
\]
on a sufficiently small ball
$B_r\subset\operatorname{int}(\Theta)$ centered at $\theta_0^\star$. Let
$g_t(\theta)=\nabla_\theta\ell(S_t,\bar n;\theta)$, let
$\mathcal F_t$ be the data-and-iterate filtration, and set
$e_t=\hat\theta_t-\theta_0^\star$. The independence of $S_t$ from
$\mathcal F_{t-1}$ gives:
\[
  \E\!\left[g_t(\hat\theta_{t-1})\mid\mathcal F_{t-1}\right]
  =\nabla_\theta
    \ell_\infty(\bar n;\hat\theta_{t-1}).
\]
Positivity and twice continuous differentiability of $q$ over its finite
support give, for some $C<\infty$,
\[
  \E\!\left[\|g_t(\theta)\|^2\mid\mathcal F_{t-1}\right]
  \leq C,
  \qquad \text{for any }\theta\in B_r.
\]
Projection onto the closed convex set $B_r$ is nonexpansive, so the projected
recursion satisfies:
\[
  \E\!\left[\|e_t\|^2\mid\mathcal F_{t-1}\right]
  \leq
  \|e_{t-1}\|^2
  +2\gamma_t e_{t-1}'\nabla_\theta
    \ell_\infty(\bar n;\hat\theta_{t-1})+C\gamma_t^2
  \leq
  (1-2\kappa\gamma_t)\|e_{t-1}\|^2+C\gamma_t^2.
\]
Because $\sum_t\gamma_t=\infty$ and $\sum_t\gamma_t^2<\infty$,
\citet[Theorem~1]{robbins1971convergence} implies that $\|e_t\|^2$ converges
and $\sum_t\gamma_t\|e_{t-1}\|^2<\infty$ almost surely. The almost-sure limit
of $\|e_t\|^2$ must therefore be zero. Ces\`aro
averaging then gives $\bar\theta_T\to\theta_0^\star$. Applying the same
argument to the unprojected recursion stopped upon leaving $B_r$, and then
localizing on the event of eventual confinement, gives the unprojected result.
}

\subsection{Proof of Proposition~\ref{prop:asnorm}}
\label{app:proof-asnorm}

Use the conditional probabilities $q(s;\theta)$ and retained sample size $N_T$
defined in Section~\ref{sec:conditional-truncated-likelihood}.
The one-observation criterion is
$\ell(S_t,\bar n;\theta)=\1\{S_t\in\cS_{\bar n}\}\log q(S_t;\theta)$.
Since $\theta_0^\star\in\Theta_0(\bar n)$, the conditional laws on
$\cS_{\bar n}$ coincide:
$q(\cdot;\theta_0^\star)=q(\cdot;\theta_0)$ on $\cS_{\bar n}$. Hence, for any
function $g$ on $\cS_{\bar n}$:
\begin{equation}\label{eq:a6-master}
  \E_{\theta_0}\!\left[\1\{S\in\cS_{\bar n}\}g(S)\right]
  =\pi_{\bar n}(\theta_0)
    \E_{\theta_0^\star}\!\left[g(S)\mid S\in\cS_{\bar n}\right].
\end{equation}

Each $q(s;\theta)$ is positive and smooth near $\theta_0^\star$, and
$\cS_{\bar n}$ is finite. Differentiating
$\sum_{s\in\cS_{\bar n}}q(s;\theta)=1$ once and twice therefore gives, at
$\theta_0^\star$:
\[
  \E_{\theta_0^\star}\!\left[\nabla_\theta\log q(S;\theta_0^\star)
  \mid S\in\cS_{\bar n}\right]=0,
  \qquad
  \E_{\theta_0^\star}\!\left[-\nabla_\theta^2\log q(S;\theta_0^\star)
  \mid S\in\cS_{\bar n}\right]
  =\mathcal I_{\bar n}(\theta_0^\star),
\]
where the conditional Fisher information is:
\[
  \mathcal I_{\bar n}(\theta)
  =\sum_{s\in\cS_{\bar n}}\frac{1}{q(s;\theta)}
     \nabla_\theta q(s;\theta)
     [\nabla_\theta q(s;\theta)]' .
\]
Define
\[
  \mathcal V
  =\E_{\theta_0}\!\left[\nabla_\theta\ell(S_t,\bar n;\theta_0^\star)
    [\nabla_\theta\ell(S_t,\bar n;\theta_0^\star)]'\right]
\]
and apply \eqref{eq:a6-master} to the score, Hessian, and score outer product:
\begin{equation}\label{eq:a6-HV}
  \mathcal H
  =-\E_{\theta_0}\!\left[
      \nabla_\theta^2\ell(S_t,\bar n;\theta_0^\star)\right]
  =\pi_{\bar n}(\theta_0)\mathcal I_{\bar n}(\theta_0^\star)
  =\E_{\theta_0}\!\left[
      \nabla_\theta\ell(S_t,\bar n;\theta_0^\star)
      [\nabla_\theta\ell(S_t,\bar n;\theta_0^\star)]'\right]
  =\mathcal V .
\end{equation}
As a Gram matrix, $\mathcal I_{\bar n}(\theta_0^\star)$ is positive
semidefinite. The nonsingularity assumption in
Proposition~\ref{prop:asnorm} makes it positive definite. Thus
$\mathcal H=\mathcal V$ is positive definite.
Hence
$\nabla_\theta^2\ell_\infty(\bar n;\theta_0^\star)=-\mathcal H$
is negative definite and hence Hurwitz (all its eigenvalues have strictly
negative real parts), providing the local stability required for
stochastic-approximation averaging.

The scores are i.i.d. with mean zero and variance $\mathcal V$, so:
\[
  T^{-1/2}\sum_{t=1}^T
  \nabla_\theta\ell(S_t,\bar n;\theta_0^\star)
  \xrightarrow{d}N(0,\mathcal V),
\]
and the Hessian satisfies the usual local uniform law of large numbers. For the
truncated maximum-likelihood estimator $\tilde\theta_T$ in case~(a), a Taylor
expansion of the score equation
around $\theta_0^\star$ gives:
\[
  \sqrt T(\tilde\theta_T-\theta_0^\star)
  =\mathcal H^{-1}T^{-1/2}\sum_{t=1}^T
    \nabla_\theta\ell(S_t,\bar n;\theta_0^\star)+o_p(1).
\]
We therefore obtain the sandwich formula:
\[
\sqrt T(\tilde\theta_T-\theta_0^\star)
\xrightarrow{d}N(0,\mathcal H^{-1}\mathcal V\mathcal H^{-1}).
\]

{
For the averaged stochastic-gradient estimator, the conditional drift of the
recursion is the gradient of the population criterion,
$\nabla_\theta\ell_\infty(\bar n;\theta)$.
By assumption, $\hat\theta_T\to\theta_0^\star$ almost surely. If projection is
used, choose $\delta>0$ such that the $\delta$-ball around
$\theta_0^\star$ lies in both the projection set and the neighborhood
where each $q(s;\theta)$ is three times continuously differentiable. Finite support and local
smoothness bound the score norm by some $C_g$ on this ball. Eventually
$\|\hat\theta_{t-1}-\theta_0^\star\|<\delta/2$ and
$\gamma_tC_g<\delta/2$, so the unprojected proposal lies in the projection set.
The projection is therefore inactive after a finite random time.
Continuous differentiability through order three gives:
\[
  \nabla_\theta\ell_\infty(\bar n;\theta)
  =-\mathcal H(\theta-\theta_0^\star)
   +O(\|\theta-\theta_0^\star\|^2).
\]
The score innovations form a martingale-difference sequence whose conditional
covariance converges to $\mathcal V$; finite support supplies the local moment
and Lindeberg conditions. Moreover, $c\in(1/2,1)$ implies:
\[
  t\gamma_t\to\infty,
  \qquad
  \frac{\gamma_t-\gamma_{t+1}}{\gamma_t}=o(\gamma_t).
\]
Then, given the assumed convergence of the raw iterates, the local averaging argument
in the proof of
\citet[Theorem~2]{polyak1992acceleration}
yields:
\[
  \sqrt T(\bar\theta_T-\theta_0^\star)
  =\mathcal H^{-1}T^{-1/2}\sum_{t=1}^T
    \nabla_\theta\ell(S_t,\bar n;\theta_0^\star)+o_p(1),
\]
which is the same first-order representation as the truncated MLE.
}\par

Since $\mathcal H=\mathcal V=\pi_{\bar n}(\theta_0)
\mathcal I_{\bar n}(\theta_0^\star)$, the sandwich reduces to:
\[
  \mathcal H^{-1}\mathcal V\mathcal H^{-1}
  =
  \frac{1}{\pi_{\bar n}(\theta_0)}
  \mathcal I_{\bar n}(\theta_0^\star)^{-1}.
\]
Finally, $N_T/T\xrightarrow{p}\pi_{\bar n}(\theta_0)$, and Slutsky's theorem gives,
for either estimator $\tilde\theta_T$ in the proposition:
\[
  \sqrt{N_T}(\tilde\theta_T-\theta_0^\star)
  \xrightarrow{d}
  N\!\left(0,\mathcal I_{\bar n}(\theta_0^\star)^{-1}\right).
\]
\hfill$\square$

\begin{remark}[Information rank and score inference]
\label{rem:a6-identif}\label{rem:a6-score}
{Write $G_{\bar n}(\theta)$ for the Jacobian of the retained
probability vector $(q(s;\theta))_{s\in\cS_{\bar n}}$. Positivity of the
retained probabilities gives
$\mathcal I_{\bar n}(\theta)=G_{\bar n}(\theta)'
\operatorname{diag}\{q(s;\theta)^{-1}\}G_{\bar n}(\theta)$, so
$\operatorname{Null}\mathcal I_{\bar n}(\theta)
=\operatorname{Null}G_{\bar n}(\theta)$. In $\mathcal L_\Lambda$, with
coordinates $\theta=(\lambda_i,a_{ij}:i<j)$, $\mathcal I_{\bar n}(\theta)$ is
nonsingular whenever all $\lambda_i>0$, all $a_{ij}\neq0$, and
$\bar n\geq2$. Indeed, a null direction $h$ leaves $q(\varnothing;\theta)$,
hence the truncated normalizer and every retained principal minor, unchanged
to first order; singletons then give $D_h\lambda_i=0$, and pairs give
$2a_{ij}D_ha_{ij}=0$. At $A_0=0$, by contrast, every skew score vanishes, so
$\mathcal I_{\bar n}(\theta_0)$ has rank $n$ although $\theta_0$ is
point-identified.

Under an interior null $\theta=\theta_0$, the score central limit theorem
above and $N_T/T\xrightarrow{p}\pi_{\bar n}(\theta_0)$ give
$N_T^{-1/2}U_T(\theta_0)\xrightarrow{d}N(0,\mathcal I_{\bar n}(\theta_0))$.
This proves \eqref{eq:prop6-score} and justifies the region
\eqref{eq:prop6-score-region} when the information is nonsingular. When it
has locally constant rank, the same argument with the Moore--Penrose
pseudoinverse gives \eqref{eq:prop6-score-partial}. At rank-changing points
such as $A_0=0$, the pseudoinverse statistic concerns only first-order score
directions, and we make no confidence-region claim.\par}
\end{remark}

\section{Online Appendix: Additional results}
\subsection{Stabilized ASGD robustness exercise}
\label{app:stabilized-asgd}
\label{sec:simulation-asgd-redesign}

The scalar ASGD recursion examined in Section~\ref{sec:simulation-study} can
retain a substantial initialization transient in poorly conditioned designs.
This variant introduces pilot-based matrix preconditioning,
indexes updates by retained observations, and averages the iterates after a
burn-in period. It also uses slower learning-rate decay.

For each design and truncation level, let \(Y_1,\ldots,Y_N\) be exactly \(N\)
draws from the retained conditional law. The recursion uses the same
\(\vartheta=(\alpha_0,\alpha_1,\eta)'\) coordinates as the regular simulation
design, with \(\rho=\exp(\eta)\). The first
\(m_N=\lceil N^{0.60}\rceil\) observations form a pilot and remain included in
\(N\), although they are not reused as recursive updates. The pilot MLE
\(\widehat\vartheta_{m_N}\) initializes the recursion and yields the frozen
preconditioner
\[
  P_N
  =
  \left\{\widehat{\mathcal I}_{m_N}
  +\frac{\lambda_{\max}(\widehat{\mathcal I}_{m_N})}{9}I_3\right\}^{-1},
  \qquad
  \widehat{\mathcal I}_{m_N}
  =\mathcal I_{\bar n}(\widehat\vartheta_{m_N}).
\]
For \(k=m_N+1,\ldots,N\), the subsequent updates are
\begin{equation}
  \vartheta_k
  =
  \Pi_\Theta\!\left[
    \vartheta_{k-1}+a_{k-m_N}P_N
    \partial_\vartheta\log q_{\bar n}(Y_k;\vartheta_{k-1})
  \right],
  \qquad
  a_j=0.2(1+j/50)^{-0.60}.
  \label{eq:simulation-preconditioned-asgd}
\end{equation}
Only iterates after \(b_N=\lceil N^{0.70}\rceil\) enter the average,
\[
  \overline\vartheta_N^{\mathrm{stab}}
  =\frac{1}{N-b_N}\sum_{k=b_N+1}^N\vartheta_k.
\]
The paired truncated MLE uses the same \(N\) retained observations.

For an estimator \(\widetilde\vartheta\), define normalized Fisher risk by
\[
  \mathcal R_N(\widetilde\vartheta)
  =
  \frac{N}{3}\,
  \E\!\left[
    (\widetilde\vartheta-\vartheta_0)'
    \mathcal I_{\bar n}(\vartheta_0)
    (\widetilde\vartheta-\vartheta_0)
  \right],
\]
and define dependence error on the reported scale by
\[
  \operatorname{RMSE}_\rho(\widetilde\vartheta)
  =
  \left\{
    \E\!\left[(\exp(\widetilde\eta)-\rho_0)^2\right]
  \right\}^{1/2}.
\]

Table~\ref{tab:simulation-asgd-redesign} reports performance relative to the
paired MLE over 200 replications. In the baseline and dense designs, the
\(\rho\)-RMSE ratio is \(4.43\)--\(5.48\) at \(N=16{,}000\) and
\(6.86\)--\(8.88\) at \(N=64{,}000\). The normalized Fisher-risk ratios also
increase with \(N\). This pattern is not due to a poor MLE denominator: at
\(N=64{,}000\), normalized MLE Fisher risk ranges from \(0.93\) to \(1.13\)
across these six cells. Replacing the pilot preconditioner by its oracle
counterpart yields \(\rho\)-RMSE ratios of \(6.65\)--\(8.86\), nearly the
same as the feasible results. The evidence is therefore consistent with a
persistent initialization component under the fixed \(\eta\)-coordinate
ridge-and-gain specification, rather than error in estimating the pilot
information matrix.

The weak-dependence design gives \(\rho\)-RMSE ratios of
\(1.34\)--\(1.54\) at \(N=64{,}000\). Its normalized Fisher-risk ratios are
\(1.18\)--\(1.80\), but the corresponding MLE risks are
\(15.9\)--\(28.9\), far from the regular benchmark of one. No stabilized-ASGD
path activates the projection in any reported cell. Thus, in the coordinates
used by the main simulation, the joint stabilization does not eliminate the
gap with MLE.

\begin{table}[htbp]
  \centering
  \small
  \caption{Stabilized ASGD relative to paired MLE in the regular designs}
  \label{tab:simulation-asgd-redesign}
  \begin{tabular}{lcrr}
    \hline
    Design & \(\bar n\) & \(N=16{,}000\) & \(N=64{,}000\) \\
    \hline
    \multicolumn{4}{l}{\textit{Panel A: normalized Fisher-risk ratio}} \\
    Baseline & 4 & \(734.017\ [564.026,920.869]\)
                 & \(3047.295\ [2209.686,3951.232]\) \\
             & 5 & \(1266.541\ [965.432,1601.564]\)
                 & \(3005.759\ [2164.202,3960.102]\) \\
             & 8 & \(1199.212\ [836.908,1592.544]\)
                 & \(3521.345\ [2406.098,4776.091]\) \\
    Dense    & 4 & \(952.716\ [702.251,1251.617]\)
                 & \(3152.242\ [2173.469,4215.633]\) \\
             & 5 & \(1400.476\ [1002.209,1849.842]\)
                 & \(3181.502\ [2030.623,4558.362]\) \\
             & 8 & \(3884.114\ [2920.663,4985.889]\)
                 & \(6463.940\ [3956.320,9448.690]\) \\
    \hline
    \multicolumn{4}{l}{\textit{Panel B: \(\rho\)-RMSE ratio}} \\
    Baseline & 4 & \(4.483\ [3.964,5.098]\)
                 & \(8.880\ [7.618,10.272]\) \\
             & 5 & \(5.097\ [4.475,5.792]\)
                 & \(7.371\ [6.161,8.665]\) \\
             & 8 & \(4.693\ [4.088,5.403]\)
                 & \(7.655\ [6.459,9.060]\) \\
    Dense    & 4 & \(4.692\ [4.133,5.372]\)
                 & \(7.602\ [6.401,8.896]\) \\
             & 5 & \(4.428\ [3.829,5.129]\)
                 & \(6.927\ [5.659,8.198]\) \\
             & 8 & \(5.477\ [4.779,6.221]\)
                 & \(6.864\ [5.545,8.213]\) \\
    \hline
  \end{tabular}

  \vspace{0.4em}
  \parbox{0.94\textwidth}{\footnotesize
  Panel A reports ratios of Monte Carlo mean normalized Fisher risks. Panel B
  reports ratios of root mean squared errors for \(\widehat\rho=\exp(\widehat
  \eta)\). Brackets give paired percentile-bootstrap \(95\%\) intervals.
  There are 200 paired replications per cell.}
\end{table}

\subsection{Observation-specific kernels and streaming estimation}
\label{sec:simulation-contextual-asgd}

We now let the kernel vary with an observed continuous covariate.
Let \(Z_t\sim U[-1,1]\) independently across \(t\), and,
conditional on \(Z_t\), draw \(S_t\) from the DPP with:
\begin{equation}
  L_t(\theta)
  =D_t(\alpha)(I_8+e^\eta R_0)D_t(\alpha),
  \qquad
  [D_t(\alpha)]_{ii}
  =\exp\left\{\frac{\alpha_0+\alpha_1Z_tx_i}{2}\right\}.
  \label{eq:simulation-contextual-kernel}
\end{equation}
Here \(x_i\) and \(R_0\) are those of
\eqref{eq:simulation-kernel}. We set \(\alpha_{1,0}=0.8\) and
\(\rho_0=e^{\eta_0}=1.5\), and choose \(\alpha_{0,0}=-1.0440\) so that
\(\E|S_t|=2.5\). The matched static design sets \(Z_t=1\) and recalibrates
\(\alpha_{0,0}=-1.1320\) to preserve the same expected cardinality. Thus the
two designs have the same parameterization, dependence strength, gain
schedule, and average set size. In the contextual design, the pairs
\((Z_t,S_t)\) are i.i.d.; conditional on the realized contexts, the subset
laws are independent but non-identical. For
\(d\in\{\mathrm{stat},\mathrm{ctx}\}\), let
\(\overline{\mathcal I}_d
=\E_d[\mathcal I(Z;\theta_{0,d})]\) denote the integrated Fisher information.
Here \(\mathcal I(Z;\theta_{0,d})\) is the Fisher information conditional on
\(Z\), and the static distribution is degenerate at \(Z=1\).

For a common kernel, the log-likelihood can be written as:
\[
  \ell_T(\theta)
  =\sum_{s\subseteq\{1,\ldots,8\}}N_T(s)\log p(s;\theta),
\]
so one counting pass reduces the data to 256 frequencies. With continuous
\(Z_t\), observations having the same realized subset generally have
different normalizers, and the corresponding criterion is instead:
\[
  \ell_T^{\mathrm{ctx}}(\theta)
  =\sum_{t=1}^T
  \left\{\log\det L_{t,S_t}(\theta)
  -\log\det[I_8+L_t(\theta)]\right\}.
\]
{\emergencystretch=2em
Therefore, each full-likelihood evaluation must revisit the covariates and
observation-specific normalizers, whereas post-pilot ASGD evaluates one
component score and then discards the observation.\par}

The reported sample size \(T\) includes an initialization block of
\(m=4{,}000\) observations. An MLE obtained by optimizing from three starting
values and retaining the solution with the highest likelihood initializes both
estimators. Writing \(k=t-m\), ASGD processes observations
\(t=m+1,\ldots,T\) with learning rate \(0.60k^{-0.60}\) and averages all
iterates dated \(t=m+1,\ldots,T\). The paired MLE uses all \(T\) observations.
Both estimators use the full likelihood, and
Table~\ref{tab:simulation-contextual-asgd} reports 200 paired replications per
cell. Because the pairs \((Z_t,S_t)\) are i.i.d. and the distribution of \(Z_t\) does not
depend on the parameter \(\theta\), the argument underlying
Proposition~\ref{prop:asnorm} carries over with the conditional score and the
integrated information \(\overline{\mathcal I}_d\).

To compare joint accuracy across designs, define the Fisher-weighted risk:
\[
  R_{T,d}(\tilde\theta)
  =T\,\E_d\left[
    (\tilde\theta-\theta_{0,d})'
    \overline{\mathcal I}_d
    (\tilde\theta-\theta_{0,d})
  \right].
\]
The factor \(T\) places this risk on the root-\(T\) asymptotic scale.

\begin{table}[!htbp]
  \centering
  \small
  \begin{tabular}{lrrrrr}
    \hline
    \multicolumn{6}{l}{Panel A: estimator accuracy} \\
    Design & \(T\) & MLE RMSE & ASGD RMSE
      & RMSE ratio & Risk ratio \\
    \hline
    Static &    64,000 & 0.02369 & 0.07828 & 3.30 & 4.12 \\
           &   256,000 & 0.01255 & 0.02569 & 2.05 & 2.02 \\
           & 1,000,000 & 0.00706 & 0.01006 & 1.43 & 1.39 \\
    \hline
    Contextual &    64,000 & 0.02450 & 0.07283 & 2.97 & 4.08 \\
               &   256,000 & 0.01216 & 0.02544 & 2.09 & 2.14 \\
               & 1,000,000 & 0.00678 & 0.00938 & 1.38 & 1.34 \\
    \hline
    \multicolumn{6}{l}{Panel B: pilot-inclusive computation at \(T=1{,}000{,}000\)} \\
    Design & Estimator & Obs. visits
      & Enumerated weights & Seconds & MLE evals. \\
    \hline
    Static     & MLE  & 1.004 &   0.017 & 0.001 & 14.57 \\
               & ASGD & 1.000 & 254.990 & 2.430 & -- \\
    Contextual & MLE  & 18.700 & 4787.133 & 24.233 & 18.46 \\
               & ASGD & 1.241 & 317.629 & 2.643 & -- \\
    \hline
  \end{tabular}
  \caption{Accuracy and computation with static and observation-specific
  kernels. Panel A reports RMSE for \(\widehat\rho\); both ratios are ASGD
  divided by MLE within the same design. At \(T=1{,}000{,}000\), paired-bootstrap
  \(95\%\) intervals for the contextual ASGD-to-MLE \(\rho\)-RMSE and
  Fisher-weighted risk ratios are \([1.25,1.53]\) and \([1.22,1.49]\),
  respectively. All MLE optimizations satisfy the
  numerical convergence criterion, every ASGD path completes, and no ASGD path
  activates the projection. In Panel B, observation visits, state-weight
  evaluations, and elapsed seconds include both initialization and main
  estimation. MLE evaluations count only main-stage criterion evaluations.
  Observation visits and state-weight evaluations are in millions. One
  state-weight evaluation is one of the 256 terms used to form a normalizer;
  static evaluations use compressed counts. Elapsed seconds are median
  estimation times from ten single-threaded
  replications on an Apple M1 iMac; data generation is excluded.}
  \label{tab:simulation-contextual-asgd}
\end{table}

Because finite-sample ASGD performance also depends on the curvature of the
objective function, we report the spectral ratio of
\(\overline{\mathcal I}_d\):
\[
  \kappa(\overline{\mathcal I}_d)
  =\frac{\lambda_{\max}(\overline{\mathcal I}_d)}
  {\lambda_{\min}(\overline{\mathcal I}_d)}.
\]
It is \(52.26\)
in the static design and \(41.97\) in the contextual design, indicating similar
conditioning. These values are much smaller than the \(377\)--\(470\) found in
the regular designs of Table~\ref{tab:simulation-asgd-long}, which also use
different initialization and learning-rate decay. Consequently, only the
static--contextual comparison within the present experiment isolates the effect
of observation-specific covariates.

The ASGD accuracy gap contracts substantially with sample size and is similar
in the static and contextual designs. Nevertheless, ASGD remains less accurate
than MLE at the largest horizon. The paired-bootstrap intervals for both
contextual accuracy ratios lie above one, indicating that the remaining gaps
exceed Monte Carlo uncertainty.

The computational ordering reverses across designs. Count compression favors
full-sample MLE for static kernels, whereas observation-specific kernels make
streaming estimation advantageous. In this implementation, ASGD is about nine
times faster than contextual MLE at the largest horizon, at a finite-sample
accuracy cost.

\subsection{Relation to the literature on matrix-valued binary variables}
\label{app:cyber-matrix-literature}
Two related model classes provide context for the specifications in
Section~\ref{sec:cyber-model-specification}.

First, the quadratic-exponential model of \citet{lovison2006matrix} allows
interactions within and between units, but its normalizing constant sums over
all $2^{qp}$ binary configurations. With $q=5$ and $p=4$, this sum has
$2^{20}=1{,}048{,}576$ terms. The NDPP instead uses the determinant
$\det(I_{20}+L_t)$, avoiding this enumeration.

Second, \citet{mariet2016kronecker} impose the symmetric product
$L=L_S\otimes L_B$, with both positive definite factors. To compare with the
NDPP specification, we estimate a symmetric DPP with the
same cell-specific spline intensities, written using the quality--diversity
decomposition:
\begin{equation}
\begin{aligned}
 L_t^{\mathrm{sym}}
 &=D_t^{\mathrm{sym}}
 (C_G^{\mathrm{sym}}\otimes C_B^{\mathrm{sym}})
 D_t^{\mathrm{sym}},\\
 C_G^{\mathrm{sym}}&=(C_G^{\mathrm{sym}})'\succ0,
 & C_B^{\mathrm{sym}}&=(C_B^{\mathrm{sym}})'\succ0,\\
 \operatorname{diag}(C_G^{\mathrm{sym}})
 &=\operatorname{diag}(C_B^{\mathrm{sym}})=\mathbf 1,
\end{aligned}
 \label{eq:cyber-symmetric-benchmark}
\end{equation}
where
$D_t^{\mathrm{sym}}=\operatorname{diag}
\{\exp(z_t'\beta_i^{\mathrm{sym}}/2):i=1,\ldots,20\}$.
This specification is more general than that of
\citet{mariet2016kronecker} because the cell-specific $D_t^{\mathrm{sym}}$ need
not itself factor as a Kronecker product.
The factors contain ten and
six free dependence parameters, respectively, so this symmetric benchmark
has the same 16 dependence parameters as the NDPP axis model in
\eqref{benchmark}. But its symmetric likelihood kernel implies nonpositive
conditional pairwise covariances.

\subsection{Normalized dependence estimates for the PRC application}
\label{app:cyber-selfscaled-parameters}

Table~\ref{tab:cyber-selfscaled-parameters} reports the normalized dependence
estimates discussed in Section~\ref{sec:cyber-eling-jung}.

\begin{table}[!htbp]
\centering
\begin{singlespace}
\scriptsize
\setlength{\arraycolsep}{3pt}
\[
\widehat S=
\begin{array}{c|rrrrr}
 & \mathrm{BSF}&\mathrm{BUS}&\mathrm{MED}&\mathrm{EDU}&\mathrm{PUBLIC}\\\hline
\mathrm{BSF}    & 0     & 1.433 & 1.168 & 1.073 & 0.924\\
\mathrm{BUS}    &-1.433 & 0     & 1.978 &-1.603 &-1.537\\
\mathrm{MED}    &-1.168 &-1.978 & 0     & 1.344 &-0.631\\
\mathrm{EDU}    &-1.073 & 1.603 &-1.344 & 0     & 2.068\\
\mathrm{PUBLIC} &-0.924 & 1.537 & 0.631 &-2.068 & 0
\end{array}
\]
\[
\widehat B=
\begin{array}{c|rrrr}
 & \mathrm{HACK}&\mathrm{ELET}&\mathrm{DISC}&\mathrm{INSD}\\\hline
\mathrm{HACK} & 0     & 1.119 & 1.207 & 1.358\\
\mathrm{ELET} &-1.119 & 0     &-1.218 & 0.677\\
\mathrm{DISC} &-1.207 & 1.218 & 0     &-1.171\\
\mathrm{INSD} &-1.358 &-0.677 & 1.171 & 0
\end{array}
\]
\[
\widehat C_S=
\begin{array}{c|rrrrr}
 & \mathrm{BSF}&\mathrm{BUS}&\mathrm{MED}&\mathrm{EDU}&\mathrm{PUBLIC}\\\hline
\mathrm{BSF}    & 1     &-0.353 &-0.309 &-1.410 &-0.717\\
\mathrm{BUS}    &-0.353 & 1     &-0.395 & 1.174 &-0.847\\
\mathrm{MED}    &-0.309 &-0.395 & 1     &-0.652 &-0.443\\
\mathrm{EDU}    &-1.410 & 1.174 &-0.652 & 1     & 0.943\\
\mathrm{PUBLIC} &-0.717 &-0.847 &-0.443 & 0.943 & 1
\end{array}
\]
\end{singlespace}
\caption{Normalized point estimates for the selected time-adjusted breach-led
NDPP. Rows and columns of $\widehat S$ and $\widehat C_S$ follow
BSF, BUS, MED, EDU, PUBLIC; those of $\widehat B$ follow HACK, ELET, DISC,
INSD. Together, the signed matrices constitute one representative of the
diagonal-sign and transposition equivalence class.}
\label{tab:cyber-selfscaled-parameters}
\end{table}

\subsection{Starting values and coordinate scaling for LANL ASGD}
\label{app:lanl-asgd-implementation}

This appendix gives the initialization and scaling rules used in
Section~\ref{sec:lanl-asgd-application}.
For the NDPP, write
$\theta=(\beta_1',\ldots,\beta_{10}',r')'$, where
$r=(R_{ij}:1\leq i<j\leq10)'$ collects the upper-triangular entries of $R$;
for the independence model, $\theta=(\beta_1',\ldots,\beta_{10}')'$.
Before tuning, we estimate a fixed positive diagonal scaling matrix
$C_{\mathrm{sc}}$ from score magnitudes on Days~2--8 and write:
\[
\theta=\theta_{\mathrm{init}}+C_{\mathrm{sc}}v,
\]
where $\theta_{\mathrm{init}}$ is the starting value.
{Its activity intercepts are smoothed marginal logits on
Days~2--8, its other activity coefficients are zero, and each skew coordinate
$R_{ij}$, $i<j$, starts at a small positive value that increases with the
pair odds ratio of computers $i$ and $j$ on Days~2--8.\footnote{Specifically,
$R_{ij}^{\mathrm{init}}=\min\{0.35,\,0.02+0.10\max(\widehat{\mathrm{OR}}_{ij}-1,0)^{1/2}\}$,
where $\widehat{\mathrm{OR}}_{ij}$ is the odds ratio of the activity
indicators of computers $i$ and $j$ on Days~2--8, with $0.5$ added to each
cell of the $2\times2$ table. The resulting starting values range from
$0.021$ to $0.133$.}
As in the PRC application, the skew start must be nonzero because an ASGD
recursion started at $R=0$ never leaves independence.}
The scalar
ASGD recursion is applied to $v$, so updates in the original coordinates
multiply the score by $C_{\mathrm{sc}}^2$.\footnote{The diagonal entries of
$C_{\mathrm{sc}}^2$ are inverse regularized root mean squared calibration
scores, normalized to have median one and restricted to $[0.1,10]$. These
scales are held fixed throughout tuning and estimation.}

\end{document}